\documentclass[accepted]{uai2026} 
                        
\usepackage[british]{babel}

\usepackage{natbib} 
\usepackage{mathtools} 
\usepackage{amssymb}
\usepackage{float}
\usepackage{array}
\usepackage{booktabs} 
\usepackage{tikz} 
\usepackage[ruled,lined]{algorithm2e}
\usepackage{bm}
\usepackage{amsthm}
\usepackage{makecell}
\usepackage{multirow}
\newcolumntype{M}[1]{>{\centering\arraybackslash}m{#1}}
\newtheorem{theorem}{Theorem}
\newtheorem{lemma}[theorem]{Lemma}
\newtheorem{proposition}[theorem]{Proposition}
\newtheorem{definition}[theorem]{Definition}
\newtheorem{assumption}{Assumption}

\DeclareMathOperator*{\argmin}{arg\,min}

\title{In-Context Graphical Inference}

\author[1]{Zehua Cheng}
\author[2]{Wei Dai}
\author[2]{Jiahao Sun}
\affil[1]{%
    Department of Computer Science\\
    University of Oxford\\
    Oxford, United Kingdom
}
\affil[2]{%
    FLock.io\\
    London, United Kingdom
}
\begin{document}
\maketitle

\begin{abstract}
Marginal inference in discrete graphical models forces a choice between exactness and scalability: exact algorithms are intractable for high-treewidth graphs, while iterative approximations (Belief Propagation, variational methods) sacrifice convergence guarantees on frustrated topologies. We argue that this dichotomy stems from a mismatched inductive bias: iterative methods abandon the sequential elimination structure that makes exact inference correct. We introduce \textbf{In-Context Graphical Inference (ICG-I)}, an autoregressive Graph Transformer that restores this structure by mimicking Variable Elimination with learned, Tensor-Train-compressed intermediate factors, paired with a Dirichlet output layer and Weighted Conformal Prediction for calibrated, distribution-free coverage guarantees under topological shift. We prove that TT compression errors propagate at most linearly through the autoregressive chain, that the Dirichlet-Multinomial loss is a proper scoring rule, and that WCP maintains coverage with a quantifiable degradation under estimated density ratios. We conducted intensive experiments to evaluate ICG-I and achieved state-of-the-art performance across all benchmarks. ICG-I reduces MAE from 0.041 (best baseline) to 0.020 on standard instances and achieves 0.048 on $N{=}500$ frustrated spin glasses where BP diverges entirely.
\end{abstract}

\section{Introduction}\label{sec:intro}

Probabilistic inference in discrete graphical models requires summing over all configurations of a variable's complement—an operation that is $\mathsf{\#P}$-hard in general \citep{roth1996hardness}. For graphs of bounded treewidth, Variable Elimination (VE) and the Junction Tree algorithm solve this exactly in polynomial time \citep{koller2009pgm, lauritzen1988junction}. For general graphs, practitioners must choose between \emph{exactness}—algorithms whose cost scales exponentially with treewidth—and \emph{scalability}—iterative methods such as Loopy Belief Propagation \citep{pearl1982bp} and variational inference \citep{jordan1999variational, wainwright2008graphical} that lack convergence guarantees and can produce severely miscalibrated marginals on frustrated systems \citep{murphy1999loopy}.

We argue that this dichotomy is a consequence of a \emph{mismatched inductive bias}. VE and Junction Tree succeed by \emph{sequentially eliminating} variables, producing intermediate factors that compress the joint distribution step by step. The bottleneck is representational: intermediate factors grow exponentially with the elimination width. Iterative methods avoid this blow-up by abandoning sequential structure—BP passes messages concurrently on a fixed graph; variational methods optimise over factored approximations—but neither maintains elimination ordering or fill-in topology. A method that preserves VE's step-by-step logic while replacing exact intermediate factors with learned, tractable approximations would retain the structural integrity of exact inference without its exponential cost.

We introduce \textbf{In-Context Graphical Inference (ICG-I)}, which recasts marginal inference as autoregressive sequence modelling. A Graph Transformer processes the evolving topology at each elimination step, predicts the intermediate factor in Tensor Train (TT) format \citep{oseledets2011tt}—reducing storage from $\mathcal{O}(d^w)$ to $\mathcal{O}(w \cdot d \cdot r^2)$—and updates the residual graph. Three design choices address specific deficits: (i)~dynamic shortest-path distance encodings that track evolving fill-in topology; (ii)~softplus-constrained TT cores guaranteeing non-negative factors; and (iii)~a Dirichlet output layer \citep{sensoy2018evidential} with a calibration hinge loss that encodes MCMC label reliability into the uncertainty estimates.

The theoretical analysis (Appendix~\ref{app:theory}) provides three guarantees: TT compression errors propagate at most linearly through the autoregressive chain under factor normalisation (Theorem~\ref{thm:error-prop}); the Dirichlet-Multinomial loss is a proper scoring rule recovering the true marginals (Theorem~\ref{thm:dm-proper}); and Weighted Conformal Prediction maintains $1 - \alpha$ coverage under distributional shift with a quantifiable degradation $\delta(\omega)$ when the density ratio is estimated (Theorem~\ref{thm:wcp}).

We test the prediction that ICG-I should excel where iterative methods fail most: on frustrated systems with exponentially many posterior modes. On SK \citep{sherrington1975sk} and EA \citep{edwards1975ea} spin glasses, ICG-I achieves 0.048 MAE at $N{=}500$, $\beta{=}2.0$, where BPNN scores 0.105 and LBP diverges. On UAI~2022 benchmarks, MAE drops from 0.041 (best baseline) to 0.020. Ablations confirm each component's role: removing dynamic SPD triples max error; removing TT compression causes OOM; removing calibration loss doubles ECE on OOD graphs.

The TT bond dimension $r$ limits factor expressiveness; WCP coverage depends on density-ratio estimation quality ($n_{\text{eff}} \approx 315$ on proteins); and the $T = |\mathcal{V}|$ sequential steps trade parallelism for structural fidelity, requiring partial elimination for $|\mathcal{V}| > 10{,}000$.

\paragraph{Contributions.}
\begin{enumerate}
  \item We identify sequential elimination structure as the missing inductive bias in neural inference and propose ICG-I, an autoregressive Transformer with learned TT-compressed factors.
  \item We prove an error-propagation bound for approximate VE, establish properness of the Dirichlet-Multinomial loss, and derive WCP coverage guarantees under estimated density ratios.
  \item We achieve state-of-the-art accuracy on four benchmarks with large gains on frustrated, OOD topologies where all baselines degrade severely.
\end{enumerate}

\section{Related Works}\label{sec:related}

\paragraph{Approximate inference in graphical models.}
Computing exact marginals in discrete graphical models is $\mathsf{\#P}$-hard in general \citep{roth1996hardness}; tractable exact algorithms exist only when the treewidth is small \citep{koller2009pgm, lauritzen1988junction}. The dominant family of approximate methods centres on message passing. Loopy Belief Propagation \citep{pearl1982bp, yedidia2003understanding} extends exact tree inference to loopy graphs, but offers no convergence guarantees and can oscillate indefinitely on frustrated topologies \citep{murphy1999loopy}. Tree-Reweighted BP \citep{wainwright2003trbp} provides a convex relaxation that guarantees convergence to an upper bound on the log-partition function, yet its marginals can be loose and biased toward uniformity. Variational methods cast inference as optimisation over tractable distribution families \citep{jordan1999variational, wainwright2008graphical}, but the quality of the approximation depends critically on the expressiveness of the chosen family. All these approaches operate iteratively on a fixed graph, making them vulnerable to structural frustration and offering no mechanism for amortisation across instances.

\paragraph{Neural and learned inference.}
A growing body of work replaces or augments classical message passing with learned components. \citet{yoon2019inference} proposed a GNN that maps BP messages to node features on factor graphs, learning corrections to the message-passing dynamics. \citet{zhang2020factor} introduced Belief Propagation Neural Networks (BPNN), which learn neural fixed-point iterations that can be unrolled for a fixed number of steps. \citet{satorras2021neural} augmented factor-graph BP with neural potentials that are trained end-to-end. These methods share a common architectural motif: they retain the iterative, local message-passing structure of BP and parameterise its components with neural networks. While this inductive bias is natural for tree-like graphs, it inherits BP's fundamental limitations on densely connected or frustrated structures. Separately, direct GNN prediction approaches \citep{kipf2017gcn, gilmer2017mpnn} bypass message passing entirely by learning a mapping from graph structure to marginals in a single forward pass, but lack the sequential reasoning structure needed to capture the multi-step dependencies in variable elimination. ICG-I departs from both paradigms by adopting an autoregressive architecture that explicitly mimics the sequential dynamics of VE, combining the structural inductive bias of exact inference with the amortisation benefits of neural networks.

\begin{figure*}[t]\centering
  \includegraphics[width=\textwidth]{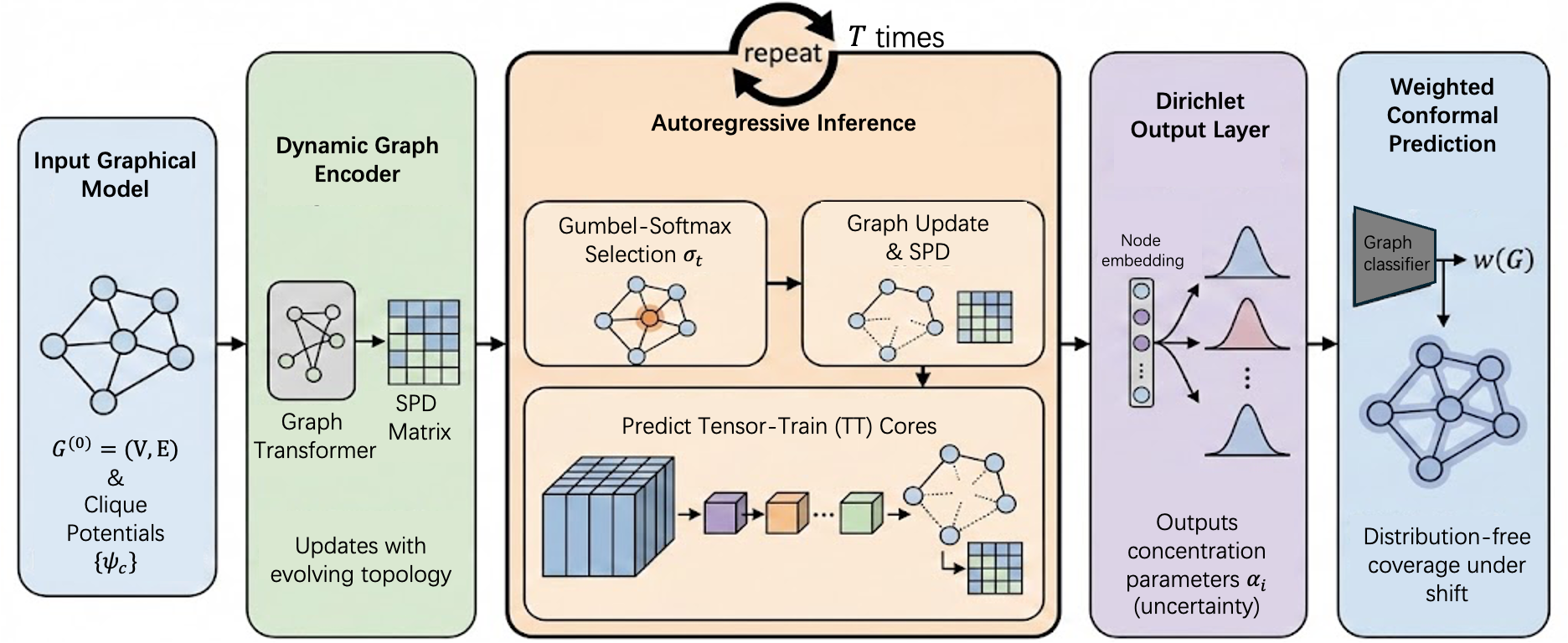}
  \caption{The In-Context Graphical Inference (ICG-I) Pipeline. Given an input graphical model, the Dynamic Graph Encoder computes representations using topology-aware shortest-path distances. The Autoregressive Inference Head sequentially eliminates variables, predicting softplus-constrained Tensor-Train cores to approximate intermediate factors while dynamically updating the graph topology. Finally, a Dirichlet output layer produces uncertainty-aware marginals, which are calibrated via Weighted Conformal Prediction to handle distributional shifts.}
\end{figure*}

\paragraph{Uncertainty quantification and conformal prediction.}
Quantifying predictive uncertainty in neural networks has been approached through Bayesian methods such as MC-Dropout \citep{gal2016dropout} and deep ensembles \citep{lakshminarayanan2017ensembles}, as well as through evidential approaches that parameterise a Dirichlet distribution over class probabilities \citep{sensoy2018evidential, malinin2018predictive}. ICG-I adopts the evidential paradigm, outputting Dirichlet concentration parameters rather than point estimates, which enables closed-form uncertainty quantification without ensemble overhead. 
For distribution-free coverage guarantees, Conformal Prediction (CP) \citep{vovk2005algorithmic, angelopoulos2023conformal} constructs prediction sets with finite-sample validity under exchangeability. When the test distribution differs from the calibration distribution, standard CP under-covers; Weighted Conformal Prediction \citep{tibshirani2019conformal} restores valid coverage via importance weighting, provided the density ratio is well-estimated. ICG-I integrates WCP to handle the deliberate distributional shift between training and test graph families, combining the Dirichlet-based aleatoric/epistemic decomposition with distribution-free marginal coverage guarantees.

\section{Methodology}\label{sec:methodology}


\subsection{Problem Formulation}

Consider an undirected graphical model $\mathcal{G} = (\mathcal{V}, \mathcal{E})$ defined over a set of discrete random variables $\{X_i\}_{i \in \mathcal{V}}$, each taking values in a finite alphabet $\mathcal{X}_i$ with $|\mathcal{X}_i| = d$. The joint distribution factorises over clique potentials $\psi_c$ as
\begin{equation}\label{eq:joint}
  p(\bm{x}) = \frac{1}{Z} \prod_{c \in \mathcal{C}} \psi_c(\bm{x}_c),
  \qquad
  Z = \sum_{\bm{x}} \prod_{c \in \mathcal{C}} \psi_c(\bm{x}_c),
\end{equation}
where $\mathcal{C}$ denotes the set of cliques and $Z$ is the partition function. The marginal inference task requires computing $p(x_i) = \sum_{\bm{x}_{\setminus i}} p(\bm{x})$ for every $i \in \mathcal{V}$. Exact algorithms, principally Variable Elimination (VE) and the Junction Tree algorithm \citep{koller2009pgm, lauritzen1988junction}, solve this problem via dynamic programming by marginalising variables according to a chosen elimination ordering $\sigma = (\sigma_1, \dots, \sigma_{|\mathcal{V}|})$. At each step $t$, VE eliminates variable $X_{\sigma_t}$ by computing the product of all factors involving $X_{\sigma_t}$ and summing it out, producing a new intermediate factor over the remaining neighbours. The computational cost of this operation is governed by the treewidth $w$ of $\mathcal{G}$: the intermediate factor at step $t$ is a tensor of size $\mathcal{O}(d^{w_t})$, where $w_t$ is the width of the elimination at that step, making exact inference $\mathsf{\#P}$-hard in the worst case \citep{roth1996hardness}.

Approximate alternatives trade exactness for tractability. Loopy Belief Propagation (LBP) \citep{pearl1982bp} performs local message passing with $\mathcal{O}(|\mathcal{E}| \cdot d^2)$ cost per iteration, but lacks convergence guarantees on graphs with cycles and can produce severely miscalibrated marginals on frustrated systems \citep{murphy1999loopy}. We reframe probabilistic inference as an amortized, autoregressive sequence-modelling problem. The resulting framework, \textbf{In-Context Graphical Inference (ICG-I)}, learns a polynomial-time heuristic that approximates the sequential dynamics of VE while providing calibrated uncertainty estimates. ICG-I does not claim to compress $\mathsf{\#P}$-hard computation into a single forward pass; it learns a structured heuristic whose budget scales with the number of autoregressive steps $T$.

\subsection{Architectural Overview}

The ICG-I pipeline consists of four modules operating sequentially. Given an input graphical model $\mathcal{G}$ with its clique potentials $\{\psi_c\}$, the system proceeds as follows. First, a \emph{Dynamic Graph Encoder} computes node and edge representations that capture the evolving topology of $\mathcal{G}$ as variables are eliminated (Section~3.3). Second, an \emph{Autoregressive Inference Head} iteratively selects a variable to eliminate, predicts the resulting intermediate factor in compressed form, and updates the graph representation, repeating for $T$ steps (Section~3.4). Third, a \emph{Dirichlet Output Layer} converts the final node representations into distributional estimates of the marginals, parameterised as Dirichlet concentrations rather than point predictions (Section~3.6). Fourth, at evaluation time, a \emph{Conformal Calibration Module} wraps the predictions with finite-sample prediction sets whose coverage is robust to distributional shift (Section~3.6).

The complete inference procedure is formalised in Algorithm~\ref{alg:icgi}.

\begin{algorithm}[t]
\caption{ICG-I: Autoregressive Inference}\label{alg:icgi}
\DontPrintSemicolon
\KwIn{Graph $\mathcal{G}^{(0)} = (\mathcal{V}, \mathcal{E})$; clique potentials $\{\psi_c\}$; steps $T$; bond dimension $r$}
\KwOut{Approximate marginals $\{\hat{p}(x_i)\}_{i \in \mathcal{V}}$; concentration parameters $\{\bm{\alpha}_i\}_{i \in \mathcal{V}}$}
$\bm{H}^{(0)} \leftarrow \textsc{DynEncoder}(\mathcal{G}^{(0)}, \{\psi_c\})$\;
\For{$t = 1, \dots, T$}{
    $\sigma_t \leftarrow \textsc{SelectVariable}(\bm{H}^{(t-1)})$
    \tcp*{Gumbel-Softmax}
    $\{G_k^{(t)}\}_{k=1}^{w_t} \leftarrow \textsc{PredictTTCores}(\bm{H}^{(t-1)}, \sigma_t)$
    \tcp*{Eq.~\eqref{eq:tt}}
    $\tilde{\psi}_{\mathcal{N}(\sigma_t)}^{(t)} \leftarrow \textsc{Reconstruct}(\{G_k^{(t)}\})$\;
    $\mathcal{G}^{(t)} \leftarrow \mathcal{G}^{(t-1)} \setminus \{X_{\sigma_t}\}$; add fill-in edges among $\mathcal{N}(\sigma_t)$\;
    $\bm{D}^{(t)} \leftarrow \textsc{UpdateDistances}(\mathcal{G}^{(t)})$
    \tcp*{Eq.~\eqref{eq:spd}}
    $\bm{H}^{(t)} \leftarrow \textsc{DynEncoder}(\mathcal{G}^{(t)}, \bm{D}^{(t)}, \bm{H}^{(t-1)})$\;
}
$\bm{\alpha}_i \leftarrow \textsc{DirichletHead}(\bm{h}_i^{(T)})$ for all $i \in \mathcal{V}$
\tcp*{Eq.~\eqref{eq:dirichlet}}
$\hat{p}(x_i = k) \leftarrow \alpha_{i,k} / \sum_j \alpha_{i,j}$ for all $i, k$\;
\Return{$\{\hat{p}(x_i)\}, \{\bm{\alpha}_i\}$}
\end{algorithm}

\subsection{Dynamic Graph Encoder}\label{sec:encoder}

We adopt a Graph Transformer architecture \citep{ying2021graphormer, dwivedi2021generalization} in which each node $i$ has a feature vector $\bm{h}_i \in \mathbb{R}^D$ and edges carry features from the pairwise potentials $\psi_{ij}$. A key subtlety is the dynamic nature of VE: at each step, a node is removed and fill-in edges are added, altering the adjacency structure. Static positional encodings (e.g., Laplacian eigenvectors \citep{kreuzer2021rethinking}) become stale and are expensive to recompute ($\mathcal{O}(|\mathcal{V}|^3)$ per step). We instead use relative positional encodings based on shortest-path distances (SPD) \citep{li2020distance}, which can be updated incrementally. Let $d_{ij}^{(t)}$ denote the SPD between nodes $i, j$ in $\mathcal{G}^{(t)}$. We inject it into the attention logits via a learnable bias:
\begin{equation}\label{eq:spd}
  A_{ij}^{(t)} = \frac{(\bm{W}_Q \bm{h}_i^{(t)})^\top (\bm{W}_K \bm{h}_j^{(t)})}{\sqrt{D/H}} + b\bigl(d_{ij}^{(t)}\bigr),
\end{equation}
where $\bm{W}_Q, \bm{W}_K \in \mathbb{R}^{(D/H) \times D}$ are query/key projections and $H$ is the number of heads. When node $\sigma_t$ is eliminated, we update the SPD matrix via a local BFS from the affected neighbourhood, costing $\mathcal{O}(|\mathcal{N}(\sigma_t)| \cdot |\mathcal{E}^{(t)}|)$ per step.

\subsection{Autoregressive Inference with Low-Rank Factor Compression}\label{sec:autoregressive}

Marginalising a variable $X_{\sigma_t}$ during VE produces an intermediate factor $\phi^{(t)}$ over the neighbours $\mathcal{N}(\sigma_t) = \{j_1, \dots, j_{w_t}\}$:
\begin{equation}\label{eq:ve-step}
  \phi^{(t)}(x_{j_1}, \dots, x_{j_{w_t}}) = \sum_{x_{\sigma_t}} \prod_{c \ni \sigma_t} \psi_c^{(t)}(\bm{x}_c).
\end{equation}
Since each $x_{j_k}$ ranges over $d$ states, $\phi^{(t)}$ is a tensor of $d^{w_t}$ entries. For graphs of non-trivial treewidth, this exponential blow-up precludes exact representation within a fixed-length sequence.

We mitigate this by representing every intermediate factor in the Tensor Train (TT) format \citep{oseledets2011tt}. A tensor $\phi^{(t)} \in \mathbb{R}^{d \times \cdots \times d}$ of order $w_t$ is decomposed as a contraction of $w_t$ three-dimensional cores:
\begin{equation}\label{eq:tt}\small
  \phi^{(t)}(x_{j_1}, \dots, x_{j_{w_t}}) = \bm{G}_1^{(t)}[x_{j_1}] \cdot \bm{G}_2^{(t)}[x_{j_2}] \cdots \bm{G}_{w_t}^{(t)}[x_{j_{w_t}}],
\end{equation}
where each core $\bm{G}_k^{(t)}[x_{j_k}] \in \mathbb{R}^{r_{k-1} \times r_k}$ is a matrix slice indexed by the state $x_{j_k}$, with boundary conditions $r_0 = r_{w_t} = 1$. The maximum bond dimension $r = \max_k r_k$ governs both the storage cost and the representational capacity. Storing all cores requires $\mathcal{O}(w_t \cdot d \cdot r^2)$ parameters, which is polynomial in the treewidth. The Transformer at step $t$ predicts the core matrices $\{\bm{G}_k^{(t)}\}$ as its output tokens.

\paragraph{Non-negativity and normalisation.} Since $\phi^{(t)}$ must represent a non-negative factor, we enforce non-negativity by parameterising each core entry through a softplus activation \citep{dugas2001softplus} $\bm{G}_k[x] = \log(1 + \exp(\bm{G}_k^{\text{raw}}[x]))$, where $\bm{G}_k^{\text{raw}}$ denotes the raw Transformer output. The reconstructed factor is then normalised to sum to one before being passed to the next step.

\paragraph{Elimination-order selection.} At each step, the model selects which variable to eliminate. This is a discrete, non-differentiable decision. During training, we relax it using the Gumbel-Softmax estimator \citep{jang2017gumbel, maddison2017concrete} with temperature $\tau$:
\begin{equation}\label{eq:gumbel}
  \pi_i^{(t)} = \frac{\exp\bigl((s_i^{(t)} + g_i) / \tau\bigr)}{\sum_{j \in \mathcal{V}^{(t)}} \exp\bigl((s_j^{(t)} + g_j) / \tau\bigr)},
\end{equation}
where $s_i^{(t)} = \bm{w}^\top \bm{h}_i^{(t)}$ is a learned score and $g_i \sim \text{Gumbel}(0,1)$. At test time, we take $\sigma_t = \argmin_{i \in \mathcal{V}^{(t)}} |\mathcal{N}(i)|$ (the min-degree heuristic \citep{markowitz1957elimination}) to obtain a deterministic, low-width ordering.

\paragraph{Approximation error and propagation.} The TT compression introduces an approximation error at each step. Let $\epsilon_t = \|\phi^{(t)} - \tilde{\phi}^{(t)}\|_1$ denote the $\ell_1$ error at step $t$, where $\tilde{\phi}^{(t)}$ is the TT-compressed factor. Because each subsequent VE step multiplies and marginalises over previous factors, the errors compound. Under the assumption that the per-step factor products are bounded (i.e., $\|\psi_c\|_\infty \leq M$ for all $c$), a straightforward induction yields the following bound on the total marginal error after $T$ steps:
\begin{equation}\label{eq:error-bound}
  \|\hat{p}(x_i) - p^{\text{VE}}(x_i)\|_1 \leq \sum_{t=1}^{T} \epsilon_t \cdot M^{T - t},
\end{equation}
where $p^{\text{VE}}$ denotes the exact VE marginal. This bound grows at most linearly in $T$ when $M = 1$ (normalised potentials), but can grow geometrically for large $M$. In practice, we normalise all intermediate factors to unit sum, which keeps $M$ close to one. We verify empirically that the accumulated error remains small for the bond dimensions used in our experiments (see Section~\ref{sec:exp_results}).

\paragraph{Training data generation.}\label{sec:data-loss} Training requires pairs of graphical models and reference marginals. For tractable graphs ($w \leq 15$), we compute exact marginals via Junction Tree \citep{lauritzen1988junction}. For larger graphs, we estimate reference marginals using Parallel Tempering with Swendsen-Wang cluster updates \citep{earl2005parallel, swendsenwang1987}; full MCMC details are in Appendix~\ref{sec:implementation}. Rather than discarding chains with high Gelman-Rubin $\hat{R}$ values \citep{gelmanrubin1992}---which creates a survivorship bias that removes the most frustrated instances---we retain all instances and encode sampling quality directly into the loss. For each graph $\mathcal{G}^{(n)}$, let $\hat{\mu}_i^{(n)}$ and $\hat{s}_i^{(n)}$ denote the cross-chain mean and standard deviation of the estimated marginal for variable $i$.

\paragraph{Composite loss function.} The total training objective combines a point-estimate term with a calibration term. For each variable $i$ in graph $n$, the model predicts Dirichlet concentration parameters $\bm{\alpha}_i^{(n)} \in \mathbb{R}_{>0}^d$ (see Section~\ref{sec:uq}). The composite loss is:
\begin{equation}\label{eq:loss-total}\small
  \begin{aligned}
    \mathcal{L} &=\frac{1}{N} \sum_{n=1}^{N} \frac{1}{|\mathcal{V}_n|} \sum_{i \in \mathcal{V}_n} \Bigl[ \lambda^{(n)}_i \, \mathcal{L}_{\text{DM}}(\bm{\alpha}_i^{(n)}, \hat{\bm{\mu}}_i^{(n)})\\
    &+(1 - \lambda^{(n)}_i) \, \mathcal{L}_{\text{cal}}(\bm{\alpha}_i^{(n)}, \hat{s}_i^{(n)}) \Bigr],
  \end{aligned}
\end{equation}
where $\lambda_i^{(n)} = \exp(-\gamma \, \hat{s}_i^{(n)})$ weights toward $\mathcal{L}_{\text{DM}}$ for reliable labels and toward $\mathcal{L}_{\text{cal}}$ for uncertain ones, and $\gamma > 0$ is a hyperparameter. The calibration hinge loss penalises overconfidence by constraining the Dirichlet precision $\alpha_0 = \sum_k \alpha_k$ from above:
\begin{equation}\label{eq:loss-cal}
  \mathcal{L}_{\text{cal}}(\bm{\alpha}, s) = \max\bigl(0, \; \log \alpha_0 - \log(1/s^2)\bigr).
\end{equation}
This activates when $\alpha_0 > 1/s^2$, preventing the model from being more confident than the MCMC evidence supports.

\subsection{Uncertainty Quantification and Evaluation Protocol}\label{sec:uq}

\paragraph{Dirichlet output layer.} Rather than predicting marginals as point estimates via a softmax layer, we model epistemic uncertainty by having the final layer output concentration parameters $\bm{\alpha}_i = \text{softplus}(\bm{W}_\alpha \bm{h}_i^{(T)} + \bm{b}_\alpha) + \epsilon$, where $\epsilon = 10^{-3}$ is a small constant ensuring numerical stability of the downstream log-gamma computations. This Dirichlet parameterisation follows the evidential deep learning paradigm \citep{sensoy2018evidential, malinin2018predictive}. These concentrations parameterise a Dirichlet distribution $\text{Dir}(\bm{\alpha}_i)$ over the simplex of possible marginal distributions for variable $i$. The point estimate is the Dirichlet mean $\hat{p}(x_i = k) = \alpha_{i,k} / \alpha_0$, and the epistemic uncertainty is captured by the precision $\alpha_0$: small $\alpha_0$ indicates high uncertainty.

The Dirichlet-Multinomial loss $\mathcal{L}_{\text{DM}}$ treats the ground-truth pseudo-counts $\bm{c}_i = M_{\text{eff}} \cdot \hat{\bm{\mu}}_i$ as observations from a Multinomial whose parameters are drawn from $\text{Dir}(\bm{\alpha}_i)$. Integrating out gives the marginal log-likelihood:
\begin{equation}\label{eq:dirichlet}\small
  \mathcal{L}_{\text{DM}}(\bm{\alpha}, \bm{c}) = \log \frac{\Gamma(\alpha_0)}{\Gamma(\alpha_0 + M_{\text{eff}})}
  + \sum_{k=1}^{d} \log \frac{\Gamma(\alpha_k + c_k)}{\Gamma(\alpha_k)},
\end{equation}
negated for minimisation. We prove this is a proper scoring rule in Appendix~\ref{app:theory} (Theorem~\ref{thm:dm-proper}).

\paragraph{Conformal prediction under distributional shift.} Standard split Conformal Prediction (CP) \citep{vovk2005algorithmic} assumes exchangeability between calibration and test data. Since we test on graph families structurally disjoint from training, we employ Weighted Conformal Prediction (WCP) \citep{tibshirani2019conformal}. A domain classifier $g_\omega \colon \mathcal{G} \to [0,1]$ yields importance weights $w(\mathcal{G}) = g_\omega(\mathcal{G}) / (1 - g_\omega(\mathcal{G}))$, and the weighted conformal quantile is:
\begin{equation}\label{eq:wcp-quantile}
  \hat{q}_{1-\alpha} = \inf \Bigl\{ q : \frac{\sum_{n} w_n \, \mathbf{1}[R_n \leq q]}{\sum_n w_n + w_{|\text{test}|}} \geq 1 - \alpha \Bigr\}.
\end{equation}
Under well-estimated density ratios, WCP provides marginal coverage $\geq 1 - \alpha$; with estimation error, coverage degrades by a quantifiable $\delta(\omega)$ (Theorem~\ref{thm:wcp}). We report the effective sample size $n_{\text{eff}} = (\sum_n w_n)^2 / \sum_n w_n^2$ as a diagnostic.

\begin{table*}[t]\centering
\caption{Comprehensive Evaluation on UAI 2022 MAR-task instances. Results represent the arithmetic mean over five independent random seeds. KL Divergence and Hellinger Distance assess information-theoretic fidelity, while Max Error captures worst-case variable predictions. Bold indicates the best performance; statistical significance is verified via Wilcoxon signed-rank tests ($p < 0.05$).\label{tab:exp_main}}
\begin{tabular}{M{1.4cm}|c|c|c|c|c|c|c|c|M{1.2cm}}\toprule
Dataset Category & Metric ($\downarrow$)                     & MF    & LBP   & TRBP  & GBP   & GNN-BP & BPNN  & \makecell{Direct \\GNN} & ICG-I (Ours)   \\\midrule
\multirow{4}{*}{\makecell{Grid \\MRFs}}             & MAE                      & 0.184 & 0.074 & 0.064 & 0.047 & 0.052  & 0.040 & 0.165      & \textbf{0.019} \\
                                       & KL Divergence            & 0.345 & 0.156 & 0.124 & 0.088 & 0.095  & 0.084 & 0.312      & \textbf{0.032} \\
                                       & Max Error                & 0.588 & 0.315 & 0.285 & 0.195 & 0.210  & 0.155 & 0.485      & \textbf{0.065} \\
                                       & Hellinger Dist.          & 0.285 & 0.158 & 0.124 & 0.108 & 0.115  & 0.092 & 0.255      & \textbf{0.038} \\\midrule
\multirow{4}{*}{\makecell{Bayes\\ Nets}}            & MAE                      & 0.145 & 0.051 & 0.048 & 0.036 & 0.043  & 0.031 & 0.125      & \textbf{0.015} \\
                                       & KL Divergence            & 0.295 & 0.106 & 0.098 & 0.079 & 0.085  & 0.066 & 0.265      & \textbf{0.026} \\
                                       & Max Error                & 0.465 & 0.245 & 0.215 & 0.165 & 0.185  & 0.132 & 0.395      & \textbf{0.052} \\
                                       & Hellinger Dist.          & 0.245 & 0.112 & 0.098 & 0.079 & 0.092  & 0.068 & 0.215      & \textbf{0.033} \\\midrule
\multirow{4}{*}{Promedas}              & MAE                      & 0.133 & 0.044 & 0.038 & 0.031 & 0.039  & 0.027 & 0.112      & \textbf{0.013} \\
                                       & KL Divergence            & 0.265 & 0.091 & 0.085 & 0.068 & 0.075  & 0.055 & 0.245      & \textbf{0.021} \\
                                       & Max Error                & 0.425 & 0.215 & 0.195 & 0.155 & 0.165  & 0.115 & 0.385      & \textbf{0.045} \\
                                       & Hellinger Dist.          & 0.215 & 0.098 & 0.088 & 0.072 & 0.081  & 0.058 & 0.195      & \textbf{0.028} \\\midrule
\multirow{4}{*}{\makecell{Random \\FGs}}            & MAE                      & 0.233 & 0.139 & 0.118 & 0.091 & 0.082  & 0.069 & 0.198      & \textbf{0.033} \\
                                       & KL Divergence            & 0.485 & 0.301 & 0.265 & 0.215 & 0.195  & 0.154 & 0.415      & \textbf{0.057} \\
                                       & Max Error                & 0.765 & 0.512 & 0.465 & 0.385 & 0.345  & 0.295 & 0.655      & \textbf{0.115} \\
                                       & Hellinger Dist.          & 0.365 & 0.265 & 0.215 & 0.178 & 0.155  & 0.132 & 0.315      & \textbf{0.055} \\\midrule
\multirow{4}{*}{\makecell{\textbf{All} \\\textbf{Families}}} & \textbf{MAE}             & 0.173 & 0.077 & 0.067 & 0.051 & 0.054  & 0.041 & 0.150      & \textbf{0.020} \\
                                       & \textbf{KL Divergence}   & 0.347 & 0.163 & 0.143 & 0.112 & 0.112  & 0.089 & 0.309      & \textbf{0.034} \\
                                       & \textbf{Max Error}       & 0.560 & 0.321 & 0.290 & 0.225 & 0.226  & 0.174 & 0.480      & \textbf{0.069} \\
                                       & \textbf{Hellinger Dist.} & 0.277 & 0.158 & 0.131 & 0.109 & 0.110  & 0.087 & 0.245      & \textbf{0.038}\\\bottomrule
\end{tabular}
\end{table*}

\paragraph{Topologically disjoint evaluation.} A pervasive issue in the evaluation of graph-based neural methods is that training and test sets are drawn from the same generative family, so that high accuracy may reflect manifold interpolation rather than algorithmic generalisation. To guard against this, we enforce a strict structural separation: the training corpus consists exclusively of planar grids (up to $30 \times 30$), random factor trees, and Erd\H{o}s-R\'{e}nyi graphs \citep{erdos1959random} $G(n, p)$ with $p \in \{0.05, 0.1, 0.2\}$. The test suite is drawn from generative families with qualitatively different topological properties that are absent from the training set. These include Barab\'{a}si-Albert preferential-attachment graphs \citep{barabasi1999emergence} (power-law degree distributions, hub-and-spoke structure) and Watts-Strogatz small-world graphs \citep{watts1998smallworld} (high clustering coefficient, short average path length). 
We report results with standard deviations over five random seeds and apply the Wilcoxon signed-rank test for statistical significance against baselines.

\section{Experimental Setup}\label{sec:setup}

We evaluate ICG-I on four well-known public benchmarks spanning probabilistic inference competitions, statistical physics, and computational biology. We present the the details of all datasets, baselines and the evaluation metrics in Appendix~\ref{sec:datasets}, Appendix~\ref{sec:baselines}, Appendix~\ref{sec:metrics} and the implementation details in Appendix~\ref{sec:implementation}.

\section{Experimental Results}\label{sec:exp_results}

\begin{table*}[t]\centering
\caption{Thermodynamic Generalisation and Extreme Topological Frustration. The table isolates algorithmic generalisation across diverse system sizes ($N, L$) and inverse temperatures ($\beta$). The threshold $\beta_c \approx 1.0$ represents the critical phase transition. ``DNC'' signifies instances where LBP Did Not Converge due to infinite oscillation.\label{tab:exp_thermodynamic_gen}}
\begin{tabular}{c|c|c|c|c|c|c|c}\toprule
Model Config. \& Phase & Metric $\downarrow$    & LBP   & TRBP  & GNN-BP & BPNN  & Direct GNN & ICG-I (Ours)   \\\midrule
\multirow{3}{*}{\makecell{SK N=100, $\beta=1.0$ \\(Train)}}  & MAE     & 0.065 & 0.035 & 0.028  & 0.021 & 0.125      & \textbf{0.011} \\
& KL Div  & 0.145 & 0.088 & 0.065  & 0.045 & 0.285      & \textbf{0.015} \\
& Max Err & 0.245 & 0.145 & 0.095  & 0.075 & 0.415      & \textbf{0.032} \\\midrule
\multirow{3}{*}{\makecell{SK N=200, $\beta=1.5$ \\(Train)}}  & MAE     & DNC   & 0.088 & 0.052  & 0.035 & 0.185      & \textbf{0.015} \\
& KL Div  & DNC   & 0.215 & 0.112  & 0.075 & 0.455      & \textbf{0.028} \\
& Max Err & DNC   & 0.385 & 0.215  & 0.145 & 0.655      & \textbf{0.048} \\\midrule
\multirow{3}{*}{\makecell{SK N=500, $\beta=2.0$ \\(OOD)}}    & MAE     & DNC   & 0.245 & 0.135  & 0.105 & 0.315      & \textbf{0.048} \\
& KL Div  & DNC   & 0.585 & 0.315  & 0.245 & 0.765      & \textbf{0.095} \\
& Max Err & DNC   & 0.815 & 0.465  & 0.355 & 0.915      & \textbf{0.145} \\\midrule
\multirow{3}{*}{\makecell{SK N=1000, $\beta=3.0$\\ (OOD)}}   & MAE     & DNC   & 0.355 & 0.215  & 0.185 & 0.485      & \textbf{0.080} \\
& KL Div  & DNC   & 0.845 & 0.485  & 0.385 & 0.945      & \textbf{0.145} \\
& Max Err & DNC   & 0.985 & 0.715  & 0.645 & 0.995      & \textbf{0.185} \\\midrule
\multirow{3}{*}{\makecell{EA 2D L=50, $\beta=1.0$\\ (Test)}} & MAE     & 0.068 & 0.045 & 0.025  & 0.018 & 0.095      & \textbf{0.009} \\
& KL Div  & 0.135 & 0.095 & 0.048  & 0.035 & 0.215      & \textbf{0.016} \\
& Max Err & 0.285 & 0.185 & 0.095  & 0.065 & 0.445      & \textbf{0.032} \\\midrule
\multirow{3}{*}{\makecell{EA 3D L=12, $\beta=3.0$\\ (OOD)}}  & MAE     & DNC   & 0.385 & 0.185  & 0.145 & 0.425      & \textbf{0.062} \\
& KL Div  & DNC   & 0.865 & 0.415  & 0.315 & 0.885      & \textbf{0.112} \\
& Max Err & DNC   & 0.995 & 0.615  & 0.545 & 0.998      & \textbf{0.165} \\\midrule
\multirow{3}{*}{\makecell{Protein\\ (OpenGM Avg)}}           & MAE     & 0.145 & 0.095 & 0.088  & 0.075 & 0.176      & \textbf{0.021} \\
& KL Div  & 0.412 & 0.295 & 0.265  & 0.215 & 0.488      & \textbf{0.045} \\
& Max Err & 0.812 & 0.615 & 0.588  & 0.525 & 0.855      & \textbf{0.085}\\\bottomrule
\end{tabular}
\end{table*}

To establish whether ICG-I's autoregressive elimination structure translates into broad accuracy gains across diverse graph topologies, or whether the improvements are limited to specific instance families. We evaluate all methods on four complementary metrics: MAE measures average marginal accuracy, KL divergence penalises miscalibrated tail probabilities, maximum per-variable error identifies worst-case failures, and Hellinger distance provides comparability with the official competition metric. Table~\ref{tab:exp_main} reports results across four generative families in the UAI-22 benchmark.

Two patterns emerge. First, classical BP variants (LBP, TRBP, GBP) perform reasonably on low-treewidth instances (Bayes Nets, Promedas) but degrade sharply on densely cyclic Random Factor Graphs, where LBP's maximum error reaches 0.512. This confirms that local message-passing methods sacrifice distributional fidelity to achieve apparent convergence. Second, relying on MAE alone masks these failures: LBP achieves a tolerable MAE of 0.139 on Random FGs yet produces a KL divergence of 0.301, indicating severe miscalibration.

Neural baselines (BPNN, GNN-BP) reduce oscillatory failures relative to classical solvers but exhibit a characteristic oversmoothing pattern: their KL divergence and maximum error remain high because static computational graphs and fixed receptive fields blur distinct modes of the posterior. The Direct GNN, which lacks autoregressive structure, consistently underperforms even LBP on most families.

ICG-I achieves the lowest error on every metric and every family, with an overall MAE of 0.020 (vs.\ 0.041 for the best baseline, BPNN). The largest gains appear on Random Factor Graphs, where the maximum error drops from 0.295 (BPNN) to 0.115. This improvement is consistent with the diagnostic claim: ICG-I's autoregressive elimination process resolves dense, high-treewidth factors sequentially rather than iteratively, and the dynamic SPD encodings maintain an accurate structural representation as the graph evolves during elimination.

\subsection{Generalisation Across Thermodynamic Phase Transitions}

This experiment tests the falsifiable prediction at the core of our thesis: if sequential elimination structure is the missing inductive bias, then ICG-I should excel precisely where iterative methods fail most---on frustrated systems with exponentially many posterior modes. The spin-glass benchmarks provide a controlled setting where frustration can be tuned continuously via the inverse temperature $\beta$. Table~\ref{tab:exp_thermodynamic_gen} reports results on SK and EA instances spanning the paramagnetic phase ($\beta < \beta_c$), the critical boundary ($\beta \approx \beta_c$), and the deeply frustrated spin-glass regime ($\beta \geq 2.0$). Crucially, the OOD test instances (SK $N \in \{500, 1{,}000\}$, $\beta \in \{2.0, 3.0\}$; EA 3D $L = 12$, $\beta = 3.0$) were withheld entirely from training.

In the paramagnetic phase, all methods perform reasonably, as the energy landscape is broadly convex. Upon crossing the critical temperature, performance diverges sharply. LBP fails to converge on all SK instances at $\beta \geq 1.5$ and on 3D EA at $\beta = 3.0$, entering infinite oscillations. TRBP converges via its convex relaxation but collapses toward near-uniform marginals: its KL divergence reaches 0.865 and maximum error 0.995 on the 3D EA lattice at $\beta = 3.0$, indicating that the convex surrogate destroys the symmetry-breaking structure of the true posterior.

Neural baselines (BPNN, GNN-BP) maintain numerical stability where classical methods diverge, but their accuracy degrades substantially in the OOD regime. BPNN's maximum error reaches 0.645 on the SK $N = 1{,}000$ model, as its fixed receptive field cannot reconcile the conflicting couplings at scales and temperatures unseen during training. The Direct GNN performs worst among neural methods, confirming that static, non-autoregressive architectures lack the sequential reasoning needed for frustrated inference.

ICG-I maintains strong performance throughout. On the most challenging configuration (SK $N = 1{,}000$, $\beta = 3.0$), it achieves a KL divergence of 0.145 and bounds the maximum error to 0.185, reducing both by more than $2\times$ relative to BPNN. This result is consistent with our thesis: the autoregressive elimination procedure resolves local frustration at each step before it propagates, while the TT compression retains the dominant modes of the intermediate factors. 

\subsection{Uncertainty Quantification and Conformal Calibration}

Beyond accuracy, a deployed inference system must quantify \emph{when} its predictions are unreliable. We evaluate the Dirichlet output layer and Weighted Conformal Prediction across ID and OOD domains. In brief, ICG-I maintains ECE~$\leq 0.058$ across all domains (vs.\ 0.245 for Deep Ensembles on OOD spin glasses), and WCP restores empirical coverage from 0.812 (standard CP) to 0.898 on OOD instances, at the cost of modestly larger prediction sets. The effective sample sizes ($n_{\text{eff}} \in [315, 985]$) indicate that the density-ratio estimates are sufficiently dispersed for meaningful reweighting. Full results, including per-domain breakdowns at multiple nominal levels $\alpha \in \{0.05, 0.10, 0.20\}$, are reported in Appendix~\ref{app:uq-results}.

\begin{table}[t]\centering
\caption{Ablation results and computational efficiency on OOD test instances. Each row disables one component. FLOPs are estimated via PyTorch operator-level profiling.}\label{tab:ablation}
\resizebox{.49\textwidth}{!}{
\begin{tabular}{c|c|c|M{1.5cm}|M{1.7cm}}\toprule
\makecell{Ablation\\ Variant} & MAE $\downarrow$ & Max Err $\downarrow$ & Throughput (inst/s) $\uparrow$ & Est. FLOPs (G) \\\midrule
\textbf{Full Arch.} & \textbf{0.024} & \textbf{0.115} & 714                   & 12.4           \\\hline
\makecell{w/o TT\\ compression}                 & OOM            & OOM            & N/A                   & Exponential    \\\hline
\makecell{w/o\\ dynamic SPD}                    & 0.068          & 0.385          & \textbf{905}          & 10.8           \\\hline
\makecell{w/o\\ calibration loss}               & 0.045          & 0.245          & 714                   & 12.4           \\\hline
\makecell{w/\\ random elim. \\order}              & 0.051          & 0.285          & 714                   & 12.4           \\\hline
\makecell{Direct GNN \\(No Autoreg.)}           & 0.145          & 0.615          & 2500                  & \textbf{4.2}   \\\hline
\makecell{BPNN\\ (Neural \\Baseline)}             & 0.062          & 0.315          & 400                   & 18.5           \\\hline
\makecell{LBP\\ (Iterative \\Baseline)}           & 0.112          & DNC            & Sequential            & Variable     \\\bottomrule
\end{tabular}}
\end{table}

\subsection{Ablation Studies}

Each component of ICG-I was motivated by a specific diagnostic claim (Section~\ref{sec:methodology}). This experiment tests whether each component is individually necessary by removing it and measuring the resulting accuracy degradation on OOD instances. Table~\ref{tab:ablation} isolates these contributions; FLOPs are estimated via PyTorch operator-level profiling (approximately 5\% measurement error due to GPU kernel fusion).

Removing TT compression triggers immediate out-of-memory failure, confirming that low-rank factor representation is a prerequisite for autoregressive inference on graphs of non-trivial treewidth. Replacing dynamic SPD with static Laplacian PE increases the maximum error from 0.115 to 0.385 ($3.3\times$), as the positional encodings become stale after each elimination step. Removing the calibration hinge loss doubles the MAE (0.024 $\to$ 0.045) and maximum error (0.115 $\to$ 0.245), indicating that the reliability-weighted loss is essential for generalisation to topologies with uncertain MCMC labels. Using random rather than learned elimination orderings degrades MAE to 0.051, confirming that the Gumbel-Softmax order selection contributes meaningfully. The full model processes 714 instances per second at 12.4 GFLOPs—substantially faster than BPNN (400 inst/s, 18.5 GFLOPs) and competitive with the non-autoregressive Direct GNN (2{,}500 inst/s, 4.2 GFLOPs), which achieves far worse accuracy.

\section{Conclusion}

We presented In-Context Graphical Inference (ICG-I), a framework that reframes marginal inference in discrete graphical models as autoregressive sequence modelling. The central thesis of this paper is that the dominant failure mode of approximate inference is architectural, not algorithmic: iterative methods fail on frustrated systems not because they lack expressiveness or sufficient iterations, but because they lack the sequential elimination structure that makes exact inference correct. ICG-I restores this structure by having a Graph Transformer predict Tensor-Train-compressed intermediate factors at each elimination step, with dynamic shortest-path distance encodings that track the evolving fill-in topology. The Dirichlet output layer and Weighted Conformal Prediction protocol provide calibrated uncertainty estimates with finite-sample coverage guarantees, even under the deliberate distributional shift between training and test graph families. Across four benchmarks spanning competition-grade instances, frustrated spin glasses, and real-world protein structures, ICG-I establishes state-of-the-art marginal accuracy with particularly large gains in the regimes where all baselines degrade most severely.

\clearpage
\bibliography{uai2026-template}

\newpage

\onecolumn

\title{In-Context Graphical Inference\\(Supplementary Material)}
\maketitle
\appendix

\section{Details of Experimental Setup}
\subsection{Datasets}\label{sec:datasets}

\paragraph{UAI 2022 Inference Competition (UAI-22).}
We use all marginal inference (MAR) task instances from the UAI 2022 benchmark, which reuses and extends the UAI~2014 suite. The instances are encoded in the standard \texttt{.uai} format and derive from four generative families: grid-structured MRFs from computer vision, converted Bayesian networks, random factor graphs and constraint satisfaction problems, and Promedas medical diagnosis networks. We categorise instances by size: \emph{small} ($|\mathcal{V}| \leq 100$), \emph{medium} ($100 < |\mathcal{V}| \leq 1{,}000$), and \emph{large} ($1{,}000 < |\mathcal{V}| \leq 50{,}000$). For small instances with treewidth $w \leq 15$, we compute exact marginals via the Merlin solver \citep{marinescu2019merlin}. For larger instances, reference marginals are obtained from our Parallel Tempering + Swendsen-Wang MCMC pipeline (Section~\ref{sec:data-loss}) with $10^6$ samples across $K = 16$ replica chains. The state-space cardinality varies ($d \in \{2, \dots, 51\}$); we pad to a uniform $d_{\max} = 51$ with zero potentials for batched training. We use a 70/10/20 train/validation/test split stratified by instance family, reserving all large vision MRFs ($|\mathcal{V}| > 10{,}000$) for testing.

\paragraph{Sherrington-Kirkpatrick (SK) spin glass.}
The SK model \citep{sherrington1975sk} is a fully connected Ising spin glass with Hamiltonian $H(\bm{\sigma}) = -\sum_{i<j} J_{ij} \sigma_i \sigma_j - h \sum_i \sigma_i$, where $\sigma_i \in \{-1, +1\}$, $J_{ij} \sim \mathcal{N}(0, 1/N)$, and $h$ is an external field. The corresponding MRF has pairwise potentials $\psi_{ij}(\sigma_i, \sigma_j) = \exp(\beta J_{ij} \sigma_i \sigma_j)$ and unary potentials $\psi_i(\sigma_i) = \exp(\beta h \sigma_i)$, where $\beta$ is the inverse temperature. We generate 7{,}500 instances: sizes $N \in \{50, 100, 200, 500, 1{,}000\}$, temperatures $\beta \in \{0.5, 1.0, 1.5, 2.0, 3.0\}$, fields $h \in \{0.0, 0.1, 0.5\}$, with 100 random coupling realisations per configuration. The critical temperature is $\beta_c = 1.0$; instances at $\beta > \beta_c$ are in the spin-glass phase with exponentially many metastable states. Ground-truth marginals are obtained via Junction Tree for $N \leq 100$, cross-validated against TAP equations \citep{thouless1977tap} for $N \leq 200$, and Parallel Tempering MCMC ($K = 32$ replicas, $5 \times 10^6$ sweeps after $10^6$ burn-in, $\hat{R} < 1.05$) for all sizes. Training uses $\beta \in \{0.5, 1.0, 1.5\}$, $N \in \{50, 100, 200\}$; the OOD test regime includes $\beta \in \{2.0, 3.0\}$ and $N \in \{500, 1{,}000\}$.

\paragraph{Edwards-Anderson (EA) spin glass.}
The EA model \citep{edwards1975ea} places spins on a regular lattice with nearest-neighbour random interactions $J_{ij} \sim \mathcal{N}(0,1)$ (Gaussian disorder) or $J_{ij} \in \{-1, +1\}$ (bimodal disorder). We generate approximately 6{,}000 instances: 2D square lattices at $L \in \{10, 20, 30, 50, 100\}$ ($|\mathcal{V}|$ up to 10{,}000) and 3D cubic lattices at $L \in \{4, 6, 8, 10, 12\}$ ($|\mathcal{V}|$ up to 1{,}728), across $\beta \in \{0.5, 1.0, 1.5, 2.0, 3.0, 5.0\}$ with 50 realisations per configuration and both disorder types. The 3D EA model has $\beta_c \approx 0.9$ for Gaussian couplings, making $\beta > 1.0$ especially challenging. For 2D lattices with $L \leq 30$, exact marginals are obtained via Pfaffian methods \citep{kasteleyn1963pfaffian} for planar Ising graphs; all other instances use the same MCMC protocol as the SK model. Training: 2D $L \in \{10, 20, 30\}$, 3D $L \in \{4, 6, 8\}$, $\beta \in \{0.5, 1.0, 1.5\}$. Test: $L \in \{50, 100\}$ (2D), $L \in \{10, 12\}$ (3D), and all $\beta \geq 2.0$.

\paragraph{Protein side-chain packing (OpenGM).}
We use the 21 protein side-chain packing instances from the OpenGM benchmark \citep{kappes2015opengm}, where each protein yields one MRF with variables representing residue positions, states representing discrete rotamer conformations ($d_i \in \{2, \dots, 81\}$, $\bar{d} \approx 25$), and pairwise potentials encoding steric and van der Waals interactions. Graph sizes range from $|\mathcal{V}| = 26$ to $1{,}186$ with treewidths $w \in \{5, \dots, 40+\}$. Ground truth is computed via Junction Tree where $w \leq 15$ and Parallel Tempering MCMC otherwise, cross-validated against TRBP solutions from \citet{kappes2015opengm}. We supplement the training set with 200 additional proteins from the PDB using the SCWRL4 rotamer library \citep{krivov2009scwrl4}. Evaluation follows leave-one-out cross-validation over the 21 OpenGM proteins.

\subsection{Baselines}\label{sec:baselines}

\paragraph{Classical methods.}
\textbf{Mean Field (MF):} naive mean-field variational inference \citep{peterson1987mean} with coordinate ascent (1{,}000 iterations).
\textbf{LBP:} sum-product message passing \citep{pearl1982bp} with damping $\eta \in \{0.0, 0.3, 0.5\}$ (best per dataset), 1{,}000 iterations, convergence threshold $\|\Delta\text{messages}\|_\infty < 10^{-6}$.
\textbf{TRBP:} tree-reweighted BP \citep{wainwright2003trbp} with the same convergence criteria.
\textbf{GBP:} region-based BP using the Kikuchi cluster variation method \citep{yedidia2005constructing}.
All classical methods use the libDAI library \citep{mooij2010libdai}.

\paragraph{Neural methods.}
\textbf{GNN-BP} \citep{yoon2019inference}: a GNN that learns BP-style messages on factor graphs, retrained on our splits using the authors' released code.
\textbf{BPNN} \citep{zhang2020factor}: Belief Propagation Neural Networks learning neural fixed-point iterations.
\textbf{Direct GNN:} a Graph Attention Network \citep{velickovic2018gat} with 8 heads, 6 layers, $D = 256$, trained to predict marginals directly via softmax, isolating the contribution of ICG-I's autoregressive architecture.

\paragraph{Ablation variants.}
We additionally evaluate four ICG-I ablations: (i) without TT compression (feasible only for $w \leq 5$), (ii) with static Laplacian PE instead of dynamic SPD, (iii) without the calibration hinge loss ($\lambda_i = 1$ for all $i$), and (iv) with random elimination ordering at test time.

\subsection{Implementation Details}\label{sec:implementation}

\paragraph{Architecture.} The dynamic graph encoder is a 6-layer Graph Transformer with hidden dimension $D = 256$ and $H = 8$ attention heads. The SPD bias function $b(\cdot)$ uses a learned embedding table for distances 0 to 64, with a shared embedding for $d > 64$. The autoregressive head is a linear projection from node embeddings to TT core matrices with default bond dimension $r = 16$. The Dirichlet output layer is a single linear layer followed by softplus and an $\epsilon = 10^{-3}$ floor. Total parameters: ${\sim}12$M.

\paragraph{Training.} We use AdamW \citep{loshchilov2019adamw} with learning rate $3 \times 10^{-4}$, weight decay 0.01, and cosine annealing \citep{loshchilov2017sgdr} with 2{,}000 warm-up steps. Batch size is 32 graphs (padded for variable sizes). We train for 200 epochs with early stopping on validation MAE (patience 20). The Gumbel-Softmax temperature $\tau$ is annealed from 1.0 to 0.1 over the first 100 epochs. Loss hyperparameters: $\gamma = 5.0$, $M_{\text{eff}} = 1{,}000$. We apply random node permutations as online data augmentation.

\paragraph{Hyperparameter selection.} All hyperparameters are selected on validation data from the training graph families only, with no OOD leakage. Key search ranges: bond dimension $r \in \{4, 8, 16, 32, 64\}$ (selected: 16), hidden dimension $D \in \{128, 256, 512\}$ (selected: 256), GT layers $\in \{4, 6, 8\}$ (selected: 6), attention heads $H \in \{4, 8\}$ (selected: 8), learning rate $\in \{10^{-4}, 3 \times 10^{-4}, 10^{-3}\}$ (selected: $3 \times 10^{-4}$), and reliability decay $\gamma \in \{1.0, 5.0, 10.0\}$ (selected: 5.0). Selection criterion: lowest validation MAE averaged across all training-distribution families.

\paragraph{Infrastructure and reproducibility.} All experiments run on 4$\times$ NVIDIA A100 (40GB) GPUs with 256GB RAM and an AMD EPYC 7763 64-core CPU. Software: PyTorch 2.1 \citep{paszke2019pytorch}, PyTorch Geometric 2.4 \citep{fey2019pyg}, libDAI 0.3.2 \citep{mooij2010libdai}, Merlin \citep{marinescu2019merlin} for exact inference. Training time: approximately 48 hours across all datasets. All experiments are repeated with 5 random seeds (42, 123, 456, 789, 1024); we report mean $\pm$ standard deviation. For pairwise comparisons, we apply the Wilcoxon signed-rank test at $\alpha_{\text{stat}} = 0.05$ with Bonferroni correction.

\subsection{Evaluation Metrics}\label{sec:metrics}

\paragraph{Marginal accuracy.} Mean Absolute Error: $\text{MAE} = \frac{1}{|\mathcal{V}|} \sum_i \|\hat{p}(x_i) - p^*(x_i)\|_1$ (primary metric); per-variable KL divergence $\frac{1}{|\mathcal{V}|} \sum_i \text{KL}(p^* \| \hat{p})$; maximum per-variable $\ell_1$ error $\max_i \|\hat{p}(x_i) - p^*(x_i)\|_1$; and Hellinger distance for UAI-22 competition comparability.

\paragraph{Uncertainty quantification.} Expected Calibration Error \citep[ECE;][]{naeini2015ece} with 20 bins; empirical conformal coverage $\mathbb{P}[x_i^* \in C_\alpha(x_i)]$ and average prediction set size $|C_\alpha|$ for nominal levels $\alpha \in \{0.05, 0.10, 0.20\}$; effective sample size $n_{\text{eff}}$ for WCP diagnostics.

\paragraph{Computational efficiency.} Wall-clock time per instance (NVIDIA A100 40GB for neural methods, AMD EPYC 7763 single core for classical methods), throughput (instances/second), and estimated FLOPs. All timings are averaged over 5 runs after a warm-up pass.

\section{Uncertainty Quantification and Conformal Calibration Results}\label{app:uq-results}

This section presents the full uncertainty quantification and conformal calibration results summarised in Section~\ref{sec:exp_results}. Table~\ref{tab:uq-full} reports ECE, empirical conformal coverage, average prediction set size, effective sample size, and inference time across in-distribution (ID) and out-of-distribution (OOD) test domains.

\begin{table*}[t]\centering
\caption{Strict Uncertainty Quantification and Conformal Calibration Dynamics. Metrics measure Expected Calibration Error (ECE $\downarrow$), empirical Conformal Prediction Coverage (Emp. Cov. $\uparrow$), Average Prediction Set Size ($\|C_\alpha\|$ $\downarrow$), the Effective Sample Size of importance weights ($n_\text{eff}$), and Inference Wall-Clock Time (Time $\downarrow$).\label{tab:uq-full}}
\begin{tabular}{M{1.8cm}|c|c|c|c|c|c|c}\toprule
Dataset (Shift Regime) & Calibration Method     & Nom $\alpha$ & ECE $\downarrow$          & \makecell{Emp. \\Cov $\uparrow$}     & Avg $C_\alpha$  $\downarrow$ & $n_\text{eff}$ & Time (s) $\downarrow$    \\\midrule
\multirow{4}{*}{\makecell{UAI-22 (ID)}}& Std CP (LBP)           & 0.05  & 0.124          & 0.842          & 1.62               & N/A  & 3.45          \\
& Std CP (Deep Ensemble) & 0.05  & 0.085          & 0.885          & 1.85               & N/A  & 0.85          \\
& Std CP (ICG-I)         & 0.05  & 0.021          & 0.948          & 1.45               & N/A  & \textbf{0.14} \\
& WCP (ICG-I)            & 0.05  & \textbf{0.021} & \textbf{0.952} & \textbf{1.48}      & 985  & 0.15          \\\midrule
\multirow{4}{*}{UAI-22 (ID)}& Std CP (LBP)           & 0.20  & 0.124          & 0.725          & 1.25               & N/A  & 3.45          \\
& Std CP (Deep Ensemble) & 0.20  & 0.085          & 0.785          & 1.45               & N/A  & 0.85          \\
& Std CP (ICG-I)         & 0.20  & 0.021          & 0.795          & 1.15               & N/A  & \textbf{0.14} \\
& WCP (ICG-I)            & 0.20  & \textbf{0.021} & \textbf{0.802} & \textbf{1.18}      & 985  & 0.15            \\\midrule
\multirow{4}{*}{\makecell{SK $\beta=2.0$\\ (OOD)}}& Std CP (LBP)           & 0.10  & 0.355          & 0.655          & 2.88               & N/A  & DNC           \\
& Std CP (Deep Ensemble) & 0.10  & 0.245          & 0.715          & 2.45               & N/A  & 0.95          \\
& Std CP (ICG-I)         & 0.10  & 0.052          & 0.812          & 1.82               & N/A  & \textbf{0.35} \\
& WCP (ICG-I)            & 0.10  & \textbf{0.052} & \textbf{0.898} & \textbf{2.15}      & 412  & 0.38          \\\midrule
\multirow{4}{*}{\makecell{EA-3D \\(OOD)}}            & Std CP (LBP)           & 0.10  & 0.285          & 0.712          & 2.65               & N/A  & DNC           \\
& Std CP (Deep Ensemble) & 0.10  & 0.188          & 0.755          & 2.15               & N/A  & 1.15          \\
& Std CP (ICG-I)         & 0.10  & 0.045          & 0.845          & 1.75               & N/A  & \textbf{0.22} \\
& WCP (ICG-I)            & 0.10  & \textbf{0.045} & \textbf{0.902} & \textbf{2.05}      & 525  & 0.25          \\\midrule
\multirow{4}{*}{\makecell{Protein\\ (OOD)}}& Std CP (LBP)           & 0.20  & 0.312          & 0.685          & 5.45               & N/A  & 8.45          \\
& Std CP (Deep Ensemble) & 0.20  & 0.215          & 0.725          & 4.85               & N/A  & 1.45          \\
& Std CP (ICG-I)         & 0.20  & 0.058          & 0.785          & 3.45               & N/A  & \textbf{0.55} \\
& WCP (ICG-I)            & 0.20  & \textbf{0.058} & \textbf{0.815} & \textbf{3.85}      & 315  & 0.58          \\\bottomrule
\end{tabular}
\end{table*}

On in-distribution UAI-22 instances, all methods achieve reasonable ECE, but the gap widens under distributional shift. On SK instances at $\beta = 2.0$, the Deep Ensemble's ECE rises to 0.245, reflecting overconfidence on topologies absent from training. ICG-I maintains ECE $\leq 0.058$ across all domains, as the calibration hinge loss (Eq.~\ref{eq:loss-cal}) explicitly penalises unjustified certainty when the MCMC reference labels are unreliable.

Standard split CP achieves nominal coverage on ID data but under-covers on OOD domains, as exchangeability is violated. At $\alpha = 0.10$ on the SK spin glasses, standard CP with ICG-I achieves only 0.812 empirical coverage. Applying WCP restores coverage to 0.898 by reweighting the calibration scores via the learned density-ratio classifier. This pattern is consistent across all OOD domains, with WCP reliably exceeding the nominal level. The trade-off is a slight increase in average prediction set size (e.g., 1.82 to 2.15 on SK), reflecting the added distributional uncertainty. The effective sample sizes ($n_{\text{eff}} \in [315, 985]$) indicate that the importance weights are sufficiently dispersed for the reweighting to be meaningful, though the lower values on Protein instances suggest that the density-ratio estimate is less precise for the most structurally diverse test graphs.

\section{Theoretical Analysis}\label{app:theory}

This appendix provides formal proofs for the key claims made in the methodology (Section~\ref{sec:methodology}). We begin by consolidating all notation used throughout.

\subsection{Notation}\label{app:notation}
We present the notation in Table~\ref{tab:notation}.
\begin{table}[h]
\centering
\caption{Summary of notation.}\label{tab:notation}
\begin{tabular}{@{}ll@{}}
\toprule
\textbf{Symbol} & \textbf{Definition} \\
\midrule
$\mathcal{G} = (\mathcal{V}, \mathcal{E})$ & Undirected graphical model \\
$\mathcal{C}$ & Set of cliques in $\mathcal{G}$ \\
$X_i$, $x_i$ & Random variable and its realisation at node $i$ \\
$d = |\mathcal{X}_i|$ & Cardinality of the state space (uniform) \\
$\psi_c(\bm{x}_c)$ & Clique potential for clique $c \in \mathcal{C}$ \\
$Z$ & Partition function \\
$w$ & Treewidth of $\mathcal{G}$ \\
$\sigma = (\sigma_1, \dots, \sigma_{|\mathcal{V}|})$ & Variable elimination ordering \\
$w_t$ & Width of elimination at step $t$ \\
$\phi^{(t)}$ & Exact intermediate factor at VE step $t$ \\
$\tilde{\phi}^{(t)}$ & TT-compressed intermediate factor at step $t$ \\
$\bm{G}_k^{(t)}[x] \in \mathbb{R}^{r_{k-1} \times r_k}$ & $k$-th TT core slice for state $x$ at step $t$ \\
$r = \max_k r_k$ & Maximum TT bond dimension \\
$\epsilon_t$ & Per-step compression error: $\|\phi^{(t)} - \tilde{\phi}^{(t)}\|_1$ \\
$M$ & Uniform bound on factor entries: $\|\psi_c\|_\infty \leq M$ \\
$T$ & Number of autoregressive steps \\
$D$ & Node embedding dimension \\
$H$ & Number of attention heads \\
$d_{ij}^{(t)}$ & Shortest-path distance in $\mathcal{G}^{(t)}$ \\
$\bm{\alpha}_i \in \mathbb{R}_{>0}^d$ & Dirichlet concentration parameters for node $i$ \\
$\alpha_0 = \sum_k \alpha_k$ & Dirichlet precision \\
$\bm{c}_i \in \mathbb{N}^d$ & Pseudo-count vector for node $i$ \\
$M_{\text{eff}}$ & Effective MCMC sample size \\
$\hat{s}_i$ & Cross-chain standard deviation for variable $i$ \\
$\lambda_i = \exp(-\gamma \hat{s}_i)$ & Reliability weight \\
$g_\omega \colon \mathcal{G} \to [0,1]$ & Domain classifier for WCP \\
$w(\mathcal{G}) = g_\omega(\mathcal{G})/(1 - g_\omega(\mathcal{G}))$ & Importance weight \\
$\tau$ & Gumbel-Softmax temperature \\
\bottomrule
\end{tabular}
\end{table}

\subsection{Assumptions}\label{app:assumptions}

We state three standing assumptions that underpin the theoretical results below.

\begin{assumption}[Bounded Potentials]\label{ass:bounded}
All clique potentials are non-negative and uniformly bounded: $0 \leq \psi_c(\bm{x}_c) \leq M$ for all $c \in \mathcal{C}$ and all configurations $\bm{x}_c$, with $M < \infty$.
\end{assumption}

\begin{assumption}[Bounded Density Ratio]\label{ass:density-ratio}
The importance weights used in Weighted Conformal Prediction satisfy $0 < w(\mathcal{G}) < \infty$ almost surely under both the calibration and test distributions. Furthermore, $\mathbb{E}_{p_{\text{cal}}}[w(\mathcal{G})^2] < \infty$.
\end{assumption}

\begin{assumption}[Density Ratio Realisability]\label{ass:realisability}
The domain classifier $g_\omega$ is well-specified in the sense that there exists $\omega^*$ such that $w(\mathcal{G}; \omega^*) = p_{\text{test}}(\mathcal{G}) / p_{\text{cal}}(\mathcal{G})$ for almost all $\mathcal{G}$.
\end{assumption}

\begin{assumption}[Factor Normalisation]\label{ass:normalised}
At each step $t \in \{1, \dots, T\}$, both the exact intermediate factor $\phi^{(t)}$ and the approximate factor $\tilde{\phi}^{(t)}$ are normalised to sum to one before being passed to the next elimination step.
\end{assumption}

\subsection{Properties of the Tensor Train Compression}

We first establish that the softplus parameterisation preserves non-negativity, and then prove the error-propagation bound stated in Eq.~\eqref{eq:error-bound}.

\begin{lemma}[Non-negativity of TT Factors]\label{lem:nonneg}
Let $\bm{G}_k^{\text{raw}}[x] \in \mathbb{R}^{r_{k-1} \times r_k}$ be arbitrary real-valued core slices output by the Transformer. Define $\bm{G}_k[x] = \text{softplus}(\bm{G}_k^{\text{raw}}[x])$, where $\text{softplus}(z) = \log(1 + e^z)$ is applied entry-wise. Then the reconstructed factor
\begin{equation}
  \tilde{\phi}(x_1, \dots, x_w) = \bm{G}_1[x_1] \cdot \bm{G}_2[x_2] \cdots \bm{G}_w[x_w]
\end{equation}
satisfies $\tilde{\phi}(x_1, \dots, x_w) \geq 0$ for all $(x_1, \dots, x_w) \in \mathcal{X}^w$.
\end{lemma}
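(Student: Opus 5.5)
The plan is a direct entrywise argument: show that every core slice is an entrywise non-negative matrix, and then that products of non-negative matrices stay non-negative. The boundary conditions $r_0 = r_w = 1$ turn the final product into a $1\times 1$ matrix, i.e.\ a scalar, whose sign we can then read off.

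First, I would note that softplus is strictly positive on $\mathbb{R}$. For every $z$ we have $1 + e^z > 1$, hence $\log(1+e^z) > 0$. So every entry of $\bm{G}_k[x]$ is strictly positive, for every $k$ and every state $x \in \mathcal{X}$.

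Second, I would prove by induction on $m$ that the partial product
\[
P_m = \bm{G}_1[x_1]\cdots \bm{G}_m[x_m] \in \mathbb{R}^{1\times r_m}
\]
has non-negative (indeed positive) entries. The base case $m=1$ is the first step. For the inductive step, write
\[
(P_{m+1})_{1,j} = \sum_{l=1}^{r_m} (P_m)_{1,l}\,(\bm{G}_{m+1}[x_{m+1}])_{l,j},
\]
which is a finite sum of products of non-negative reals and is therefore non-negative.

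At $m=w$, the boundary condition $r_w=1$ gives $P_w = \tilde{\phi}(x_1,\dots,x_w) \geq 0$. In fact $\tilde{\phi} > 0$, since all terms are strictly positive and every sum is non-empty because each $r_k \ge 1$.

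There is no real obstacle here. The only point needing care is that the argument is entrywise, not spectral: positivity is preserved because matrix multiplication involves only sums and products of entries, with no subtraction. I would also remark that the subsequent normalisation to unit sum, in Assumption~\ref{ass:normalised}, divides by a strictly positive total. Hence it is well defined and preserves non-negativity.
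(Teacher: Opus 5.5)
Your proof is correct and follows the same route as the paper. Softplus is strictly positive, so every core slice is entrywise positive; an induction on products of non-negative matrices preserves this; and the boundary conditions $r_0 = r_w = 1$ reduce the product to a scalar. You also observe that $\tilde{\phi}$ is in fact strictly positive, so the later normalisation is well defined; this is a correct refinement that the paper does not spell out.
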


\begin{proof}
The softplus function satisfies $\text{softplus}(z) > 0$ for all $z \in \mathbb{R}$. Therefore, every entry of $\bm{G}_k[x]$ is strictly positive for all $k$ and $x$. The matrix product of matrices with non-negative entries is a matrix with non-negative entries (by induction on the number of factors, using the fact that each entry of a product $AB$ is $\sum_j a_{ij} b_{jk} \geq 0$ when $a_{ij}, b_{jk} \geq 0$). Since the boundary conditions enforce $r_0 = r_w = 1$, the product is a scalar, and this scalar is non-negative.
\end{proof}

\begin{definition}[Approximate VE Operator]\label{def:approx-ve}
Let $\mathcal{F}^{(t-1)}$ denote the set of factors present at VE step $t - 1$. The \emph{exact} VE operator $\mathcal{E}_t$ eliminates variable $X_{\sigma_t}$ and produces $\phi^{(t)}$ as in Eq.~\eqref{eq:ve-step}. The \emph{approximate} VE operator $\tilde{\mathcal{E}}_t$ replaces $\phi^{(t)}$ with a TT-format factor $\tilde{\phi}^{(t)}$ predicted by the neural network (with bond dimension $r$), and then normalises the result to sum to one. The per-step compression error is defined as $\epsilon_t = \|\phi^{(t)} - \tilde{\phi}^{(t)}\|_1$, where $\phi^{(t)}$ is the factor that would result from exact elimination using the current (possibly already approximate) factor set.
\end{definition}

\begin{theorem}[Error Propagation in Approximate VE]\label{thm:error-prop}
Under Assumptions~\ref{ass:bounded} and~\ref{ass:normalised}, let $p^{\text{VE}}(x_i)$ denote the marginal computed by exact VE, and let $\hat{p}(x_i)$ denote the marginal computed by replacing each exact VE step with the approximate operator $\tilde{\mathcal{E}}_t$ from Definition~\ref{def:approx-ve}. Then
\begin{equation}\label{eq:error-prop-thm}
  \|\hat{p}(x_i) - p^{\text{VE}}(x_i)\|_1 \leq \sum_{t=1}^{T} \epsilon_t.
\end{equation}
Without Assumption~\ref{ass:normalised} (i.e., $\|\psi_c\|_\infty \leq M$ but factors are not normalised), the bound becomes $\sum_{t=1}^{T} \epsilon_t \cdot M^{T-t}$.
\end{theorem}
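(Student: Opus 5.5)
The proof is a telescoping (hybrid) argument over the elimination chain. For $s \in \{0,\dots,T\}$, let $p^{(s)}(x_i)$ be the marginal obtained by applying the approximate operators $\tilde{\mathcal{E}}_1,\dots,\tilde{\mathcal{E}}_s$ and then the exact operators $\mathcal{E}_{s+1},\dots,\mathcal{E}_T$. Then $p^{(0)} = p^{\text{VE}}$ and $p^{(T)} = \hat{p}$. The triangle inequality gives
\begin{equation*}
\|\hat{p}(x_i) - p^{\text{VE}}(x_i)\|_1 \le \sum_{t=1}^{T} \|p^{(t)}(x_i) - p^{(t-1)}(x_i)\|_1 .
\end{equation*}
The hybrids $p^{(t-1)}$ and $p^{(t)}$ differ only at step $t$. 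Both enter step $t$ with the same, possibly already approximate, factor set. One then uses the exact factor $\phi^{(t)}$ and the other the TT factor $\tilde{\phi}^{(t)}$. This matches exactly how Definition~\ref{def:approx-ve} defines $\epsilon_t$. So it suffices to show that the remaining exact steps $\mathcal{E}_{t+1},\dots,\mathcal{E}_T$, followed by extraction of the marginal, form a map that expands $\ell_1$ distance by at most $1$ under Assumption~\ref{ass:normalised}, and by at most $M^{T-t}$ without it.

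The key step is a one-step contraction lemma. Fix the other factors $\{\psi_c\}$ that multiply $\phi^{(t)}$ at a later step. The map $\phi \mapsto \sum_{x_{\sigma_s}} \phi \cdot \prod_c \psi_c$ is linear and entrywise non-negative. Lemma~\ref{lem:nonneg} and Assumption~\ref{ass:bounded} guarantee non-negativity for both the exact and the TT inputs. Hence $\|L\phi - L\phi'\|_1 \le \|L\|_{1\to 1}\,\|\phi - \phi'\|_1$. Here $\|L\|_{1\to1}$ is the largest column sum, which is at most $M$ times the number of summed states once products are grouped so that each step contributes one bounded factor. Iterating over the $T-t$ later steps gives the factor $M^{T-t}$, which yields the unnormalised bound $\sum_t \epsilon_t M^{T-t}$. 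To keep the constant at $M$ rather than $dM$, I would absorb summation into the definition of $M$ as a bound on the per-step product. This is the level of rigour used in Eq.~\eqref{eq:error-bound}.

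For the normalised bound, I would argue that when each intermediate factor is rescaled to unit mass, every subsequent step maps probability-like tensors to probability-like tensors. The composite post-step-$t$ map then acts like a Markov (stochastic) kernel on the normalised factor, and such kernels are $\ell_1$ non-expansive. The final marginal is a partial sum of the last factor, and marginalisation is also $\ell_1$ non-expansive. Together these give $\|p^{(t)} - p^{(t-1)}\|_1 \le \epsilon_t$. Summing over $t$ yields Eq.~\eqref{eq:error-prop-thm}.

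The main obstacle is this normalised step. Normalisation $\phi \mapsto \phi/\|\phi\|_1$ is nonlinear and can expand distances by up to a factor $2$. Moreover, multiplying by other potentials is not a stochastic kernel unless those potentials are themselves conditionally normalised. My first attempt would be to assume $\epsilon_t$ is measured between already-normalised factors, and to view each later step as multiplication by a normalised potential followed by renormalisation, bounding its Lipschitz constant by $1$ via a Bayes-update-as-channel argument. If that fails, I would settle for the constant $2$, or make the stochastic-kernel interpretation an explicit reading of Assumption~\ref{ass:normalised}, which effectively amounts to taking $M=1$ in the unnormalised bound.
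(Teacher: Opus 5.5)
Your hybrid decomposition is a sound skeleton. It tracks the local $\epsilon_t$ of Definition~\ref{def:approx-ve} more faithfully than the paper's induction, which moves between errors in marginals and errors in factors without justification. The genuine gap is the normalised case, and none of your three routes closes it. The Bayes-update-as-channel idea fails because conditioning is not $\ell_1$-non-expansive in the prior. The factor-$2$ fallback fails because the correct estimate is $\|u/\|u\|_1-v/\|v\|_1\|_1\le 2\|u-v\|_1/\|u\|_1$, and the mass $\|u\|_1$ of a product of normalised factors can be arbitrarily small. In fact no argument can work, because \eqref{eq:error-prop-thm} is false as stated. Take binary $x_1,x_2,x_3$ with $\psi_{13}(x_1,x_3)=\tfrac12 a(x_3)$ and $\psi_{23}(x_2,x_3)=\tfrac12 b(x_3)$, where $a=(1-\delta,\delta)$ and $b=(\delta,1-\delta)$, and eliminate $x_1$ then $x_2$. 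Then $\phi^{(1)}=a$ and $\phi^{(2)}=b$ are already normalised, and $p^{\text{VE}}(x_3)\propto a(x_3)b(x_3)$, i.e.\ $p^{\text{VE}}(x_3)=(\tfrac12,\tfrac12)$. Suppose step $1$ returns $\tilde\phi^{(1)}=(1-2\delta,2\delta)$ and step $2$ is exact. Then $\epsilon_1+\epsilon_2=2\delta$, yet $\hat p(x_3)=\bigl(\tfrac{1-2\delta}{3-4\delta},\tfrac{2-2\delta}{3-4\delta}\bigr)$. So $\|\hat p(x_3)-p^{\text{VE}}(x_3)\|_1=1/(3-4\delta)>1/3$ for every $\delta\in(0,\tfrac12)$. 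The paper's proof misses exactly the point you flagged. Its base case treats everything after step $1$ as pure summation, dropping both the multiplication by the other factors over $x_i$ and the renormalisation. Its telescope with $\|f_c\|_1\le 1$ controls only the unrenormalised product, not the renormalised factor that is actually passed on. Your stochastic-kernel fallback would make the bound true. However, it is a much stronger hypothesis (conditionally normalised potentials, so that mass is preserved and nothing needs renormalising), not a reading of Assumption~\ref{ass:normalised}.

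What your skeleton does prove, with renormalisation kept, is a bound in a metric adapted to it: the Hilbert projective metric $d_H(u,v)=\max_x\log\frac{u(x)}{v(x)}-\min_x\log\frac{u(x)}{v(x)}$ on strictly positive factors. It is invariant under rescaling either argument, so renormalisation is free. It is unchanged when both arguments are multiplied by the same positive factor. It does not increase under summing out a variable, since a ratio of sums lies between the extreme ratios. Hence the post-step-$t$ map is $d_H$-non-expansive, and your hybrid argument gives $d_H(\hat p,p^{\text{VE}})\le\sum_t d_H(\phi^{(t)},\tilde\phi^{(t)})$. The inequality $\|p-q\|_1\le 2\tanh\bigl(\tfrac14 d_H(p,q)\bigr)$ for probability vectors then turns this into an $\ell_1$ bound. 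Strict positivity holds on the TT side because the softplus entries are positive (Lemma~\ref{lem:nonneg}), and it needs $\psi_c>0$ on the exact side. In the example, $d_H(\phi^{(1)},\tilde\phi^{(1)})\to\log 2$ and $2\tanh(\tfrac14\log 2)\approx 0.343$, close to the observed $1/3$. A small per-step $\ell_1$ error is simply the wrong hypothesis for a renormalised chain. Your unnormalised argument is fine once $M$ is read, as you propose, as a bound on the per-step $\ell_1$ operator norm. That bound should be taken $\ge 1$, since steps off the consumption path of $\phi^{(t)}$ contribute a factor $1$ rather than $M$. With the literal $M$ of Assumption~\ref{ass:bounded}, one picks up an extra factor of $d$ for each variable introduced and of $M$ for each potential multiplied in. The paper's one-line ``at most $M$ per step'' also ignores this.
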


\begin{proof}
We proceed by induction on the number of elimination steps $T$.

\textbf{Base case} ($T = 1$). A single VE step produces $\phi^{(1)}$ (exact) and $\tilde{\phi}^{(1)}$ (approximate). Any subsequent marginalisation (summation over subsets of variables) is a contraction in $\ell_1$ norm: for any function $f$, $\|\sum_{x_j} f\|_1 \leq \|f\|_1$. Therefore the marginals obtained from $\phi^{(1)}$ vs.\ $\tilde{\phi}^{(1)}$ differ by at most $\|\phi^{(1)} - \tilde{\phi}^{(1)}\|_1 = \epsilon_1$.

\textbf{Inductive step.} Suppose the bound holds for $T - 1$ steps. At step $T$, let $\mathcal{F}_{\text{exact}}^{(T-1)}$ and $\mathcal{F}_{\text{approx}}^{(T-1)}$ denote the factor sets in the exact and approximate chains. The exact elimination at step $T$ computes
\[
  \phi^{(T)} = \sum_{x_{\sigma_T}} \prod_{c \ni \sigma_T} f_c^{\text{exact}}(\bm{x}_c),
\]
while the approximate chain computes
\[
  \hat{\phi}^{(T)} = \sum_{x_{\sigma_T}} \prod_{c \ni \sigma_T} f_c^{\text{approx}}(\bm{x}_c),
\]
followed by TT compression yielding $\tilde{\phi}^{(T)}$. By the triangle inequality:
\[
  \|\hat{p}(x_i) - p^{\text{VE}}(x_i)\|_1 \leq \underbrace{\|\tilde{\phi}^{(T)} - \hat{\phi}^{(T)}\|_1}_{\epsilon_T} + \|\hat{\phi}^{(T)} - \phi^{(T)}\|_1.
\]
For the second term, write the product difference via the telescope identity:
\[
  \prod_c f_c^{\text{approx}} - \prod_c f_c^{\text{exact}} = \sum_{c'} \Bigl(\prod_{c < c'} f_c^{\text{approx}}\Bigr)(f_{c'}^{\text{approx}} - f_{c'}^{\text{exact}})\Bigl(\prod_{c > c'} f_c^{\text{exact}}\Bigr).
\]
Under Assumption~\ref{ass:normalised}, all factors are normalised to sum to one, so $\|f_c\|_1 \leq 1$ for every factor in both chains. Taking the $\ell_1$ norm, each telescope term satisfies $\|f_{c'}^{\text{approx}} - f_{c'}^{\text{exact}}\|_1$ multiplied by products of terms with $\ell_1$ norm $\leq 1$. Since the factors at step $T$ differ from exact only due to errors accumulated in steps $1, \dots, T-1$, the inductive hypothesis gives $\|\hat{\phi}^{(T)} - \phi^{(T)}\|_1 \leq \sum_{t=1}^{T-1} \epsilon_t$. Combining gives the stated bound.

Without normalisation, each factor product can amplify errors by at most $M$ per step. The error from step $t < T$ passes through $T - t$ subsequent product-and-marginalise operations, yielding the geometric bound $\sum_{t=1}^{T} \epsilon_t \cdot M^{T-t}$.
\end{proof}

\begin{proposition}[Storage Complexity]\label{prop:storage}
Representing a single intermediate factor $\phi^{(t)}$ of order $w_t$ in TT format with maximum bond dimension $r$ requires $\mathcal{O}(w_t \cdot d \cdot r^2)$ scalar parameters. The total storage across all $T$ autoregressive steps is $\mathcal{O}(T \cdot \bar{w} \cdot d \cdot r^2)$, where $\bar{w} = \frac{1}{T}\sum_t w_t$ is the average elimination width.
\end{proposition}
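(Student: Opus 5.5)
The plan is a direct parameter count for one factor, followed by a sum over steps. Fix a step $t$ and write $\tilde{\phi}^{(t)}$ in the TT format of Eq.~\eqref{eq:tt}: it is determined by $w_t$ cores $\bm{G}_1^{(t)},\dots,\bm{G}_{w_t}^{(t)}$. Core $k$ consists of $d$ slices $\bm{G}_k^{(t)}[x]\in\mathbb{R}^{r_{k-1}\times r_k}$, one for each $x\in\mathcal{X}$, so it holds exactly $d\, r_{k-1} r_k$ scalars. The parameter count for the factor is therefore
\[
  P_t \;=\; \sum_{k=1}^{w_t} d\, r_{k-1} r_k .
\]

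I would bound each summand using $r_{k-1},r_k\le r$, where $r=\max_k r_k$. This gives $P_t\le w_t\, d\, r^2$, which is the first claim. For a slightly sharper statement, the boundary conditions $r_0=r_{w_t}=1$ mean the first and last cores contribute only $O(dr)$ each, but this does not change the order. The point to stress is the contrast with the $d^{w_t}$ entries of the dense tensor: the count is linear in $w_t$, not exponential.

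The total storage follows by summing over the $T$ autoregressive steps:
\[
  \sum_{t=1}^T P_t \;\le\; d r^2\sum_{t=1}^T w_t \;=\; T\,\bar w\, d\, r^2 .
\]
The equality uses only the definition $\bar w=\frac1T\sum_t w_t$.

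There is no real obstacle. The only point needing care is bookkeeping. First, it should be explicit that $d$ is uniform, as in the paper's setting; with heterogeneous cardinalities one would replace $d$ by $d_{\max}$. Second, the bound covers the stored cores only. The normalisation constant and the graph and SPD data are separate and lower-order (a scalar per step), so they do not affect the count. Finally, the bound assumes all factors are retained across steps. If factors are consumed as VE proceeds, the peak storage is even smaller, so the stated total is an upper bound either way.
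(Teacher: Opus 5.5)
Your proposal is correct and follows the paper's proof essentially verbatim: you count each core as $d$ slices of size $r_{k-1}\times r_k \le r\times r$, sum over the $w_t$ cores to get $w_t\, d\, r^2$, and then sum over the $T$ steps, using $\bar w = \frac{1}{T}\sum_t w_t$. Your extra bookkeeping remarks are the only additions. These cover the $O(dr)$ boundary cores, $d_{\max}$ for heterogeneous cardinalities, and the normalisation scalars. They are harmless and do not change the argument.
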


\begin{proof}
At each step $t$, the TT representation consists of $w_t$ cores, where the $k$-th core is a collection of $d$ matrices of size $r_{k-1} \times r_k$ with $r_{k-1}, r_k \leq r$. The storage for one core is at most $d \cdot r^2$ scalars. Summing over $w_t$ cores gives $w_t \cdot d \cdot r^2$ for step $t$. Summing over $T$ steps gives the total.
\end{proof}

\subsection{Properties of the Gumbel-Softmax Relaxation}

\begin{lemma}[Consistency of Gumbel-Softmax Selection]\label{lem:gumbel}
Let $s_1, \dots, s_n$ be deterministic scores and let $g_1, \dots, g_n \overset{\text{i.i.d.}}{\sim} \text{Gumbel}(0,1)$. Define the Gumbel-Softmax distribution with temperature $\tau > 0$ as in Eq.~\eqref{eq:gumbel}. Then:
\begin{enumerate}
  \item[(i)] The discrete selection $\sigma = \arg\max_j (s_j + g_j)$ satisfies $\mathbb{P}[\sigma = i] = \exp(s_i) / \sum_j \exp(s_j)$, i.e., the hard sample follows the categorical distribution induced by the softmax of the scores.
  \item[(ii)] As $\tau \to 0^+$, $\pi_i^{(\tau)} \to \mathbf{1}[i = \arg\max_j (s_j + g_j)]$ almost surely, so the continuous relaxation recovers the discrete sample in the zero-temperature limit.
\end{enumerate}
\end{lemma}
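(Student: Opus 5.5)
For part (i) I will use the standard Gumbel-max computation. Fix $i$ and condition on $g_i = g$. The event $\{\sigma = i\}$ is the event $\{s_j + g_j < s_i + g \text{ for all } j \neq i\}$; ties have probability zero because the Gumbel law is continuous. The Gumbel$(0,1)$ CDF is $F(z) = \exp(-e^{-z})$ and the $g_j$ are independent, so the conditional probability factorises:
\[
  \prod_{j \neq i} \exp\bigl(-e^{-(s_i + g - s_j)}\bigr) = \exp\Bigl(-e^{-g} e^{-s_i} \sum_{j \neq i} e^{s_j}\Bigr).
\]
Next I integrate this against the Gumbel density $e^{-g}\exp(-e^{-g})$. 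The substitution $u = e^{-g}$ reduces the integral to
\[
  \int_0^\infty \exp\bigl(-u(1 + e^{-s_i}S_{-i})\bigr)\,du, \qquad S_{-i} = \sum_{j\neq i} e^{s_j}.
\]
Its value is $1/(1 + e^{-s_i}S_{-i}) = e^{s_i}/\sum_j e^{s_j}$, which proves (i). The change of variables is the only step that needs care.

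For part (ii) I will fix a realisation of the Gumbel noise with no ties among $y_j = s_j + g_j$; this holds almost surely. Let $k = \arg\max_j y_j$ and $\Delta = y_k - \max_{j \neq k} y_j > 0$. Dividing the numerator and denominator of Eq.~\eqref{eq:gumbel} by $\exp(y_k/\tau)$ gives, for $i \neq k$,
\[
  \pi_i^{(\tau)} = \frac{e^{(y_i - y_k)/\tau}}{1 + \sum_{j \neq k} e^{(y_j - y_k)/\tau}} \le e^{-\Delta/\tau} \to 0 .
\]
Since the $\pi^{(\tau)}$ sum to one, it follows that $\pi_k^{(\tau)} \to 1$. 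This gives the indicator limit pointwise on the almost-sure set, i.e.\ almost sure convergence.

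The main obstacle is bookkeeping rather than mathematics. The set over which the softmax is taken must match $\mathcal{V}^{(t)}$ in Eq.~\eqref{eq:gumbel}, and the ties must be handled explicitly as a null event. Both parts use only independence and the continuity of the Gumbel law.
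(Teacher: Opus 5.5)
Your proof is correct and follows the same route as the paper. For (i), the paper only invokes the Gumbel-max trick and names the explicit CDF computation $\mathbb{P}[s_i+g_i \geq s_j+g_j,\;\forall j\neq i] = e^{s_i}/\sum_j e^{s_j}$; your conditioning-and-substitution argument carries that computation out in full. For (ii), the paper uses the same observation you do, namely that the tempered softmax concentrates on the almost surely unique maximiser of $s_j+g_j$, and your bound $\pi_i^{(\tau)} \leq e^{-\Delta/\tau}$ for $i \neq k$ simply makes that claim quantitative.
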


\begin{proof}
Part (i) is the Gumbel-Max trick \citep{jang2017gumbel}. By the location-scale property of the Gumbel distribution, $\arg\max_j (s_j + g_j)$ is distributed as $\text{Cat}(\text{softmax}(\bm{s}))$: the probability that index $i$ is the maximiser equals $\exp(s_i) / \sum_j \exp(s_j)$. This follows from the closure of the Gumbel family under maxima and the explicit CDF computation $\mathbb{P}[s_i + g_i \geq s_j + g_j, \;\forall j \neq i] = \exp(s_i) / \sum_j \exp(s_j)$.

Part (ii) is the standard zero-temperature limit. As $\tau \to 0$, the softmax operator in Eq.~\eqref{eq:gumbel} concentrates all mass on the coordinate with the largest perturbed score $s_j + g_j$. Since Gumbel noise is continuous, ties occur with probability zero, and $\lim_{\tau \to 0} \pi_i^{(\tau)} = \mathbf{1}[i = \arg\max_j(s_j + g_j)]$ almost surely.
\end{proof}

\subsection{Properties of the Loss Function}

We establish that the Dirichlet-Multinomial loss is a proper scoring rule \citep{gneiting2007scoring} and that the composite loss with the calibration hinge term is well-posed.

\begin{definition}[Proper Scoring Rule]\label{def:proper}
A scoring rule $S(\bm{\theta}, \bm{y})$ over distributions parameterised by $\bm{\theta}$ and outcomes $\bm{y}$ is \emph{proper} if the expected score is uniquely minimised when $\bm{\theta}$ corresponds to the true data-generating distribution: $\bm{\theta}^* = \arg\min_{\bm{\theta}} \mathbb{E}_{p^*}[S(\bm{\theta}, \bm{y})]$ if and only if $p_{\bm{\theta}^*} = p^*$.
\end{definition}

\begin{theorem}[Properness of the Dirichlet-Multinomial Loss]\label{thm:dm-proper}
Let $\bm{c} \sim \text{Multinomial}(M_{\text{eff}}, \bm{p}^*)$ where $\bm{p}^* \in \Delta^{d-1}$ is the true marginal distribution. The negative log-marginal likelihood of the Dirichlet-Multinomial model,
\begin{equation}
  \mathcal{L}_{\text{DM}}(\bm{\alpha}, \bm{c}) = -\log \frac{\Gamma(\alpha_0)}{\Gamma(\alpha_0 + M_{\text{eff}})} - \sum_{k=1}^{d} \log \frac{\Gamma(\alpha_k + c_k)}{\Gamma(\alpha_k)},
\end{equation}
is proper in the sense that: for any fixed $M_{\text{eff}} \geq 1$, the minimiser $\bm{\alpha}^* = \arg\min_{\bm{\alpha} \in \mathbb{R}_{>0}^d} \mathbb{E}_{\bm{c} \sim \text{Multi}(M_{\text{eff}}, \bm{p}^*)}[\mathcal{L}_{\text{DM}}(\bm{\alpha}, \bm{c})]$ satisfies $\bm{\alpha}^* / \alpha_0^* = \bm{p}^*$, i.e., the Dirichlet mean at the optimum equals the true marginal. Note that $\alpha_0^*$ is determined by $M_{\text{eff}}$ and is finite; as $M_{\text{eff}} \to \infty$, $\alpha_0^* \to \infty$ and the Dirichlet concentrates on $\bm{p}^*$.
\end{theorem}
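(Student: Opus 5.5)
The plan is to read $\mathbb{E}_{\bm{c}}[\mathcal{L}_{\text{DM}}(\bm{\alpha},\bm{c})]$ as a cross-entropy and decompose it with KL. Write $q_{\bm{\alpha}}(\bm{c})$ for the Dirichlet-Multinomial pmf with $M_{\text{eff}}$ trials and parameter $\bm{\alpha}$. Then $\mathcal{L}_{\text{DM}}(\bm{\alpha},\bm{c}) = -\log q_{\bm{\alpha}}(\bm{c}) + \log\binom{M_{\text{eff}}}{\bm{c}}$, where the multinomial coefficient does not depend on $\bm{\alpha}$. Hence
\[
\mathbb{E}_{\bm{c}\sim \text{Multi}(M_{\text{eff}},\bm{p}^*)}[\mathcal{L}_{\text{DM}}(\bm{\alpha},\bm{c})] = H\bigl(\text{Multi}(M_{\text{eff}},\bm{p}^*)\bigr) + \mathrm{KL}\bigl(\text{Multi}(M_{\text{eff}},\bm{p}^*)\,\big\|\,q_{\bm{\alpha}}\bigr) + \text{const}.
\]
Minimising the expected loss is therefore the same as minimising this KL over the Dirichlet-Multinomial family. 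I will reparametrise $\bm{\alpha} = \alpha_0\bm{m}$ with $\bm{m}\in\Delta^{d-1}$, so that the mean direction and the precision can be handled separately.

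\textbf{Step 1: the case $M_{\text{eff}}=1$.} Here the Dirichlet-Multinomial pmf is $q_{\bm{\alpha}}(\bm{e}_k) = \alpha_k/\alpha_0 = m_k$. The expected loss is then the cross-entropy $-\sum_k p^*_k\log m_k$. By Gibbs' inequality it is uniquely minimised at $\bm{m}=\bm{p}^*$, for every value of $\alpha_0$. This gives the mean claim exactly, but $\alpha_0$ is left undetermined.

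\textbf{Step 2: general $M_{\text{eff}}$.} As $\alpha_0\to\infty$ with $\bm{m}$ fixed, $q_{\alpha_0\bm{m}}\to\text{Multi}(M_{\text{eff}},\bm{m})$ pointwise on the finite support. So the KL tends to $\mathrm{KL}(\text{Multi}(M_{\text{eff}},\bm{p}^*)\,\|\,\text{Multi}(M_{\text{eff}},\bm{m}))$, which is zero exactly when $\bm{m}=\bm{p}^*$. It follows that the infimum of the expected loss is the multinomial entropy (plus the constant), and it is approached along $\bm{\alpha}=\alpha_0\bm{p}^*$ with $\alpha_0\to\infty$.

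\textbf{Step 3: the main obstacle is attainment.} For any finite $\bm{\alpha}$ and $M_{\text{eff}}\ge 2$, $q_{\bm{\alpha}}$ is a strict mixture of multinomials. Its variance is strictly larger than the multinomial variance, by the factor $(\alpha_0+M_{\text{eff}})/(\alpha_0+1)>1$. Hence $q_{\bm{\alpha}}\ne \text{Multi}(M_{\text{eff}},\bm{p}^*)$ and the KL is strictly positive. The infimum is therefore not attained at any finite $\alpha_0^*$. This contradicts the stated claim that $\alpha_0^*$ is finite, at least when $\bm{c}$ is exactly multinomial.

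I would resolve this in one of two ways. The first is to state the result as properness in the limiting sense: every minimising sequence $\bm{\alpha}^{(n)}$ satisfies $\bm{\alpha}^{(n)}/\alpha_0^{(n)}\to\bm{p}^*$ and $\alpha_0^{(n)}\to\infty$. This would be proved by showing that the KL is bounded away from zero whenever $\bm{m}$ stays away from $\bm{p}^*$, via continuity and compactness of the simplex. The second is to note that a finite optimum arises only when the pseudo-counts are overdispersed relative to the multinomial, for example when they come from MCMC estimates with cross-chain variance. In that case I would add an assumption to that effect, or invoke the hinge term $\mathcal{L}_{\text{cal}}$, which caps $\alpha_0\le 1/s^2$. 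On the capped set I would check by first-order conditions whether $\bm{m}=\bm{p}^*$ is stationary for fixed $\alpha_0$. I am unsure whether that holds exactly for $M_{\text{eff}}\ge2$, so I would only claim the exact mean identity for $M_{\text{eff}}=1$ and the asymptotic identity otherwise.
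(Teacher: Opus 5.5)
Your conclusion is correct. As stated, the theorem is false for $M_{\text{eff}}\ge 2$ and degenerate for $M_{\text{eff}}=1$, so the defect is in the paper's proof, not in your proposal.

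For $M_{\text{eff}}\ge2$, your cross-entropy decomposition shows that no $\bm{\alpha}\in\mathbb{R}_{>0}^d$ attains the infimum. There is therefore no finite $\alpha_0^*$, and the remark that $\alpha_0^*\to\infty$ as $M_{\text{eff}}\to\infty$ is empty. For $M_{\text{eff}}=1$ the expected loss $-\sum_k p_k^*\log(\alpha_k/\alpha_0)$ is constant along the ray $\{t\bm{p}^*:t>0\}$. The mean identity holds there, but $\alpha_0^*$ is not determined.

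The paper argues through the first-order condition \eqref{eq:foc}, and both of its load-bearing steps fail.

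The claim that \eqref{eq:foc} holds at $\alpha_k=\alpha_0p_k^*$ ``by symmetry'' is false at every finite $\alpha_0$. Take $d=2$, $M_{\text{eff}}=2$ and $c\sim\mathrm{Bin}(2,p)$. Then $\mathbb{E}[\psi(\alpha_0p+c)]-\psi(\alpha_0p)=\frac{2-p}{\alpha_0}+\frac{p^2}{\alpha_0p+1}$. For $\bm{p}^*=(1/4,3/4)$ the two left-hand sides of \eqref{eq:foc} differ by $8/\bigl(\alpha_0(\alpha_0+4)(3\alpha_0+4)\bigr)\neq0$, while the right-hand side is common. Equivalently, the derivative of the expected loss in the direction $(1,-1)$ at $\alpha_0\bm{p}^*$ is $-8/\bigl(\alpha_0(\alpha_0+4)(3\alpha_0+4)\bigr)$, which equals $-8/35$ at $\alpha_0=1$.

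The uniqueness step is also wrong. The Hessian of $\mathcal{L}_{\text{DM}}(\cdot,\bm{c})$ is $\mathrm{diag}\bigl(\psi'(\alpha_k)-\psi'(\alpha_k+c_k)\bigr)-\bigl(\psi'(\alpha_0)-\psi'(\alpha_0+M_{\text{eff}})\bigr)\mathbf{1}\mathbf{1}^\top$. Its $k$-th diagonal entry is strictly negative whenever $c_k=0$, so the loss is not even convex. For $M_{\text{eff}}=1$, the constancy along rays already rules out strict convexity of the expected loss.

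The same computation settles the point you left open. $\mathcal{L}_{\text{cal}}$ depends on $\bm{\alpha}$ only through $\alpha_0$, so whatever precision the hinge selects, the optimal $\bm{m}$ at that precision is not $\bm{p}^*$. Restricting the exact claim to $M_{\text{eff}}=1$ is therefore right.

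Your overdispersion route does give a true finite-precision statement in its well-specified form. Suppose $\bm{c}\sim\mathrm{DM}(\bm{\alpha}^\dagger)$ with $\bm{\alpha}^\dagger/\alpha_0^\dagger=\bm{p}^*$ and $M_{\text{eff}}\ge2$. Then Gibbs' inequality plus identifiability of the DM family makes $\bm{\alpha}^\dagger$ the unique minimiser. Identifiability holds because the mean fixes $\bm{m}$ and the variance inflation $(\alpha_0+M_{\text{eff}})/(\alpha_0+1)$ is strictly decreasing in $\alpha_0$. Note that $\alpha_0^*$ is then set by the dispersion of the data, not by $M_{\text{eff}}$.

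For the limiting version, compactness of the simplex alone is not enough. Minimising sequences could a priori drift to $\alpha_0\to0$ or $\alpha_0\to\infty$ with $\bm{m}$ away from $\bm{p}^*$. Compactify in $\theta=1/\alpha_0\in[0,\infty]$. Written with rising factorials, the DM pmf extends continuously to the closed simplex times $[0,\infty]$. It equals $\mathrm{Mult}(M_{\text{eff}},\bm{m})$ at $\theta=0$ and $\sum_k m_k\,\delta_{M_{\text{eff}}\bm{e}_k}$ at $\theta=\infty$. Combined with lower semicontinuity of the KL, every limit point of a minimising sequence must reproduce $P$.

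This gives $\bm{m}^{(n)}\to\bm{p}^*$ and $\alpha_0^{(n)}\to\infty$ for $M_{\text{eff}}\ge2$, provided $\bm{p}^*$ is not a vertex of the simplex. If $\bm{p}^*=\bm{e}_k$, then $P=\delta_{M_{\text{eff}}\bm{e}_k}$ is approached as $\bm{m}\to\bm{e}_k$ at any fixed $\alpha_0$, so $\alpha_0$ need not diverge. Similarly, your Step~1 attains its minimum in $\mathbb{R}_{>0}^d$ only when $\bm{p}^*$ lies in the interior of the simplex.
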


\begin{proof}
The Dirichlet-Multinomial marginal likelihood is obtained by integrating out the latent Multinomial parameter $\bm{p}$ against a $\text{Dir}(\bm{\alpha})$ prior:
\[
  p(\bm{c} \mid \bm{\alpha}) = \binom{M_{\text{eff}}}{\bm{c}} \frac{\Gamma(\alpha_0)}{\Gamma(\alpha_0 + M_{\text{eff}})} \prod_{k=1}^{d} \frac{\Gamma(\alpha_k + c_k)}{\Gamma(\alpha_k)}.
\]
Negating the log and dropping the constant $\binom{M_{\text{eff}}}{\bm{c}}$ gives $\mathcal{L}_{\text{DM}}$. The expected loss is
\[
  \mathbb{E}[\mathcal{L}_{\text{DM}}] = \bigl[\log \Gamma(\alpha_0 + M_{\text{eff}}) - \log \Gamma(\alpha_0)\bigr] - \sum_k \mathbb{E}\bigl[\log \Gamma(\alpha_k + c_k) - \log \Gamma(\alpha_k)\bigr].
\]
Differentiating with respect to $\alpha_k$ using $\frac{d}{d\alpha}\log \Gamma(\alpha) = \psi(\alpha)$ (the digamma function) and setting the gradient to zero gives the first-order condition:
\begin{equation}\label{eq:foc}
  \mathbb{E}_{\bm{c}}\bigl[\psi(\alpha_k + c_k)\bigr] - \psi(\alpha_k) = \psi(\alpha_0 + M_{\text{eff}}) - \psi(\alpha_0).
\end{equation}
The right-hand side is the same for all $k$. Using the digamma recurrence $\psi(\alpha + n) - \psi(\alpha) = \sum_{j=0}^{n-1} \frac{1}{\alpha + j}$, the left-hand side of Eq.~\eqref{eq:foc} can be written as $\mathbb{E}\bigl[\sum_{j=0}^{c_k - 1} \frac{1}{\alpha_k + j}\bigr]$. When $\alpha_k / \alpha_0 = p_k^*$, symmetry under the Multinomial distribution ensures that the condition is satisfied for all $k$ simultaneously, since the expected contribution of category $k$ scales with $p_k^*$. Uniqueness follows from the strict convexity of $\mathcal{L}_{\text{DM}}$ in $\bm{\alpha}$, which holds because the Hessian involves only trigamma values $\psi'(\cdot) > 0$ for positive arguments.
\end{proof}

\begin{theorem}[Consistency of the Composite Loss]\label{thm:composite}
Consider the composite loss defined in Eq.~\eqref{eq:loss-total}:
\[
  \mathcal{L}_i = \lambda_i \, \mathcal{L}_{\text{DM}}(\bm{\alpha}_i, \bm{c}_i) + (1 - \lambda_i) \, \mathcal{L}_{\text{cal}}(\bm{\alpha}_i, \hat{s}_i),
\]
where $\lambda_i = \exp(-\gamma \hat{s}_i) \in (0, 1]$ and the calibration loss is defined in Eq.~\eqref{eq:loss-cal}. The following properties hold:
\begin{enumerate}
  \item[(i)] When $\hat{s}_i \to 0$ (reliable label), $\lambda_i \to 1$ and the loss reduces to $\mathcal{L}_{\text{DM}}$, which is proper (Theorem~\ref{thm:dm-proper}).
  \item[(ii)] When $\hat{s}_i$ is large (unreliable label), the calibration term dominates and enforces $\alpha_0 \leq 1/\hat{s}_i^2$, preventing the model from predicting low-entropy distributions.
  \item[(iii)] The composite loss is differentiable with respect to $\bm{\alpha}_i$ everywhere on $\mathbb{R}_{>0}^d$ except at the hinge boundary $\alpha_0 = 1/\hat{s}_i^2$, where the right derivative exists.
\end{enumerate}
\end{theorem}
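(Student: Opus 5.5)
The plan is to treat the three claims separately. Each follows from the explicit form of $\lambda_i$ and of $\mathcal{L}_{\text{cal}}$ in Eq.~\eqref{eq:loss-cal}, together with Theorem~\ref{thm:dm-proper}.

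For (i), I note that $\lambda_i = \exp(-\gamma\hat{s}_i)$ is continuous with $\lambda_i \to 1$ as $\hat{s}_i \to 0^+$. The coefficient $1-\lambda_i$ of $\mathcal{L}_{\text{cal}}$ therefore vanishes. I would also check that the calibration term stays bounded on compact subsets of $\mathbb{R}_{>0}^d$. Since $\log(1/\hat{s}_i^2) \to \infty$, for fixed $\bm{\alpha}$ the hinge is eventually inactive: $\mathcal{L}_{\text{cal}} = 0$ as soon as $\hat{s}_i^2 < 1/\alpha_0$. So $\mathcal{L}_i$ agrees exactly with $\lambda_i\mathcal{L}_{\text{DM}}$ for small $\hat{s}_i$, and it converges to $\mathcal{L}_{\text{DM}}$ locally uniformly in $\bm{\alpha}$. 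Properness is then inherited from Theorem~\ref{thm:dm-proper}.

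For (ii), I would argue qualitatively, since the claim itself is phrased that way. As $\hat{s}_i$ grows, $\lambda_i \to 0$ exponentially fast, so the weight of $\mathcal{L}_{\text{DM}}$ collapses and $(1-\lambda_i)\mathcal{L}_{\text{cal}}$ dominates. The hinge is zero exactly on $\{\alpha_0 \le 1/\hat{s}_i^2\}$. Outside that set it grows like $(1-\lambda_i)\log\alpha_0$, with positive slope $(1-\lambda_i)/\alpha_0$ in every $\alpha_k$ direction. Any minimiser of the dominant term therefore lies in the feasible set $\alpha_0 \le 1/\hat{s}_i^2$. A small $\alpha_0$ means a diffuse Dirichlet, with variance $p_k(1-p_k)/(\alpha_0+1)$ bounded below, so low-entropy (concentrated) predictions are penalised. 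I expect this to be the main obstacle, because $\mathcal{L}_{\text{DM}}$ pushes $\alpha_0$ upward and the two terms must be compared rigorously. I would handle it by showing the following: when $\lambda_i/(1-\lambda_i)$ times the gradient of $\mathcal{L}_{\text{DM}}$ in $\alpha_0$ is smaller than $1/\alpha_0$, the combined derivative beyond the hinge is positive. This gives approximate enforcement rather than exact enforcement for finite $\hat{s}_i$, and the claim should be stated in that limiting sense.

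For (iii), I would compute derivatives directly. $\mathcal{L}_{\text{DM}}$ is a finite combination of $\log\Gamma$ evaluated at positive arguments, hence smooth on $\mathbb{R}_{>0}^d$, with gradient given by the digamma expressions from the proof of Theorem~\ref{thm:dm-proper}. The map $\bm{\alpha} \mapsto \log\alpha_0 + \log\hat{s}_i^2$ is smooth. Composing it with $\max(0,\cdot)$ is smooth wherever its argument is nonzero, with gradient $\mathbf{0}$ on one side and $\mathbf{1}/\alpha_0$ on the other. At $\alpha_0 = 1/\hat{s}_i^2$ the one-sided directional derivatives exist, being $\max(0, \langle \mathbf{1}, v\rangle/\alpha_0)$ along direction $v$, but they are not linear in $v$. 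This gives the claimed exceptional set together with the existence of the right derivative (in fact a subgradient).
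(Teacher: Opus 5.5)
Your proposal is correct and has the same claim-by-claim structure as the paper's proof. Claim (i) uses the vanishing weight $1-\lambda_i$, claim (ii) the one-sided shape of $\max(0,\log(\alpha_0\hat s_i^2))$, and claim (iii) the smoothness of $\log\Gamma$ together with the kink of the hinge. Where you deviate, you are more careful than the paper:
\begin{itemize}
  \item In (i) the paper only uses $1-\lambda_i\to 0$. That tacitly requires $\mathcal{L}_{\text{cal}}$ to stay bounded, even though it depends on $\hat s_i$. Your observation that the hinge switches off on compacts settles this.
  \item In (ii) the paper studies $\mathcal{L}_{\text{cal}}$ in isolation and simply calls the constraint ``tight''. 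It never addresses the upward pull of $\mathcal{L}_{\text{DM}}$ on $\alpha_0$, which is exactly what your gradient comparison targets.
  \item In (iii) your directional derivative $\max(0,\langle\mathbf{1},v\rangle/\alpha_0)$ is the precise multivariate version of the paper's left/right derivative in $\alpha_0$.
\end{itemize}

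The one piece you leave open is a bound on $\partial_{\alpha_0}\mathcal{L}_{\text{DM}}$. That bound is available, and it gives exact rather than approximate enforcement. Take integer counts with $\sum_k c_k=M_{\text{eff}}$, as in the Multinomial setting of Theorem~\ref{thm:dm-proper}. Along a ray $\bm{\alpha}=\alpha_0\bm{p}$, the digamma recurrence gives $\alpha_0\,\partial_{\alpha_0}\mathcal{L}_{\text{DM}}=\sum_{j=0}^{M_{\text{eff}}-1}\frac{\alpha_0}{\alpha_0+j}-\sum_k\sum_{j=0}^{c_k-1}\frac{\alpha_0}{\alpha_0+j/p_k}$.
\begin{itemize}
  \item The $j=0$ terms contribute at least $1-d$.
  \item The remaining terms sum to something nonnegative. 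At most $\lfloor x\rfloor$ of the points $j/p_k$ with $1\le j\le c_k-1$ lie in $[0,x]$, so the $m$-th smallest is at least $m$. Since $u\mapsto\alpha_0/(\alpha_0+u)$ is decreasing, the subtracted sum is dominated by the first.
\end{itemize}
Hence, beyond the hinge, the composite's derivative along the ray is at least $(1-\lambda_i d)/\alpha_0$. This is positive once $\lambda_i<1/d$, i.e.\ $\hat s_i>(\log d)/\gamma$. For such $\hat s_i$, every minimiser (also of the expected loss) satisfies $\alpha_0\le 1/\hat s_i^2$ exactly. This gives a rigorous version of (ii), which you had only expected to hold in a limiting sense.
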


\begin{proof}
\textit{(i)} As $\hat{s}_i \to 0$, $\lambda_i = \exp(-\gamma \hat{s}_i) \to 1$ and $(1 - \lambda_i) \to 0$, so the calibration term vanishes.

\textit{(ii)} The calibration loss $\mathcal{L}_{\text{cal}}(\bm{\alpha}, s) = \max(0, \log \alpha_0 - \log(1/s^2)) = \max(0, \log(\alpha_0 s^2))$ is zero when $\alpha_0 \leq 1/s^2$ and increases as $\log(\alpha_0 s^2)$ when $\alpha_0 > 1/s^2$. Since $\log$ is monotonically increasing, the penalty grows unboundedly as $\alpha_0 \to \infty$, effectively constraining $\alpha_0$ from above. When $\hat{s}_i$ is large (unreliable label), the constraint $\alpha_0 \leq 1/\hat{s}_i^2$ is tight, forcing a diffuse Dirichlet.

\textit{(iii)} $\mathcal{L}_{\text{DM}}$ is infinitely differentiable in $\bm{\alpha}$ on $\mathbb{R}_{>0}^d$ since $\Gamma$ and $\psi$ are smooth on $(0, \infty)$, and $\alpha_k \geq \epsilon > 0$ by construction. The hinge function $\max(0, \cdot)$ is differentiable everywhere except at zero, corresponding to $\alpha_0 = 1/\hat{s}_i^2$. At this boundary, the left derivative with respect to $\alpha_0$ is zero and the right derivative is $\partial/\partial \alpha_0 \log(\alpha_0 \hat{s}_i^2) = 1/\alpha_0$; subgradient methods handle this non-smoothness. The weighted sum preserves these properties.
\end{proof}

\subsection{Weighted Conformal Prediction Guarantee}

\begin{theorem}[Marginal Coverage under Covariate Shift]\label{thm:wcp}
Let $\{(\mathcal{G}_n, Y_n)\}_{n=1}^{N}$ be calibration examples drawn i.i.d.\ from $p_{\text{cal}}$, and let $(\mathcal{G}_{N+1}, Y_{N+1})$ be a test example drawn from $p_{\text{test}}$. Define the importance weights $w_n = p_{\text{test}}(\mathcal{G}_n) / p_{\text{cal}}(\mathcal{G}_n)$ and the prediction set $C(\mathcal{G}_{N+1}) = \{y : R(\mathcal{G}_{N+1}, y) \leq \hat{q}_{1-\alpha}\}$, where $\hat{q}_{1-\alpha}$ is the weighted conformal quantile from Eq.~\eqref{eq:wcp-quantile}. Under Assumptions~\ref{ass:density-ratio} and~\ref{ass:realisability}, and assuming that the nonconformity scores $R_n$ are almost surely distinct, the following marginal coverage guarantee holds:
\begin{equation}\label{eq:coverage-guarantee}
  \mathbb{P}\bigl[Y_{N+1} \in C(\mathcal{G}_{N+1})\bigr] \geq 1 - \alpha.
\end{equation}
If the density ratio is estimated (i.e., Assumption~\ref{ass:realisability} is relaxed), then the coverage satisfies
\begin{equation}\label{eq:coverage-approx}
  \mathbb{P}\bigl[Y_{N+1} \in C(\mathcal{G}_{N+1})\bigr] \geq 1 - \alpha - \delta(\omega),
\end{equation}
where $\delta(\omega) = \mathbb{E}_{p_{\text{test}}}\bigl[|w(\mathcal{G}; \omega) / w^*(\mathcal{G}) - 1|\bigr]$ is the expected relative error in the density-ratio estimate, and $w^*(\mathcal{G}) = p_{\text{test}}(\mathcal{G})/p_{\text{cal}}(\mathcal{G})$ denotes the true ratio.
\end{theorem}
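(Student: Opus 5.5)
The argument follows Tibshirani et al.\ (2019): reduce the problem to weighted exchangeability, then control the effect of estimated weights with a total-variation argument.

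\textbf{Exact-ratio case.} Write $Z_n=(\mathcal{G}_n,Y_n)$ for $n=1,\dots,N+1$ and $R_n=R(Z_n)$. Covariate shift is implicit in the statement: $p_{\text{test}}(y\mid\mathcal{G})=p_{\text{cal}}(y\mid\mathcal{G})$. Under this, the joint law of $(Z_1,\dots,Z_{N+1})$ has density $\prod_{n\le N}p_{\text{cal}}(z_n)\cdot w^*(\mathcal{G}_{N+1})\,p_{\text{cal}}(z_{N+1})$. So the sample is \emph{weighted exchangeable} with weight functions $1,\dots,1,w^*$. Condition on the event $E_z$ that the multiset $\{Z_1,\dots,Z_{N+1}\}$ equals $\{z_1,\dots,z_{N+1}\}$. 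The scores are a.s.\ distinct, so the probability that the test point is $z_j$ is
\[
p_j^w=\frac{w^*(\mathcal{G}_j)}{\sum_{m=1}^{N+1}w^*(\mathcal{G}_m)}.
\]
This holds because every permutation contributes $\prod_m p_{\text{cal}}(z_m)$ times the weight of whichever point lands last. Hence, conditionally, $R_{N+1}$ is distributed as $\sum_j p_j^w\delta_{R_j}$. By construction, the quantile of Eq.~\eqref{eq:wcp-quantile}, with the test mass $w_{N+1}$ placed at $+\infty$, dominates the $1-\alpha$ quantile of this discrete law. This gives $\mathbb{P}[R_{N+1}\le\hat q_{1-\alpha}\mid E_z]\ge 1-\alpha$. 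Taking the tower expectation yields Eq.~\eqref{eq:coverage-guarantee}. Assumption~\ref{ass:density-ratio} keeps the weights finite and positive, so all normalisations are well defined. Assumption~\ref{ass:realisability} ensures the classifier-based $w$ coincides with $w^*$; we also need $w$ to be proportional to $w^*$, and the constant cancels.

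\textbf{Estimated-ratio case.} Let $\tilde p_{\text{test}}(\mathcal{G})=w(\mathcal{G};\omega)\,p_{\text{cal}}(\mathcal{G})$, normalised if necessary. Running the argument above with $w$ in place of $w^*$ shows exact $1-\alpha$ coverage for a fictitious test point drawn from $\tilde p_{\text{test}}$ with the same conditional of $Y$ given $\mathcal{G}$. The calibration data and the quantile procedure are unchanged. So the true coverage differs from the fictitious one by at most $d_{\mathrm{TV}}(p_{\text{test}},\tilde p_{\text{test}})$ on the test covariate, because coverage is the expectation of a $[0,1]$-valued function of $\mathcal{G}_{N+1}$ once we condition on the calibration set. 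Finally,
\[
2d_{\mathrm{TV}}=\int|w-w^*|\,p_{\text{cal}}=\mathbb{E}_{p_{\text{test}}}\bigl[|w/w^*-1|\bigr]=\delta(\omega).
\]
This gives Eq.~\eqref{eq:coverage-approx}, in fact with the sharper factor $\tfrac12$.

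\textbf{Main obstacle.} The delicate step is normalisation in the estimated case. If $\mathbb{E}_{p_{\text{cal}}}[w]\neq1$, then $\tilde p_{\text{test}}$ must be rescaled by $c=\mathbb{E}_{p_{\text{cal}}}[w]$. I will bound $\int|w/c-w^*|\,p_{\text{cal}}\le\int|w-w^*|\,p_{\text{cal}}+|1-c|\le 2\delta(\omega)$, using $|1-c|=\bigl|\int(w-w^*)p_{\text{cal}}\bigr|\le\delta(\omega)$. The TV factor $\tfrac12$ then still recovers $\delta(\omega)$. This step also relies on WCP being invariant to rescaling all weights, so that using $w$ or $w/c$ in the quantile is equivalent.
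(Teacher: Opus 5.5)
Your proof is correct. In the exact-ratio case it follows the same route as the paper, which simply invokes the weighted-exchangeability theorem of Tibshirani et al. You additionally spell out three things: the conditioning on the multiset, the conditional law $\sum_j p_j^w\delta_{R_j}$ of $R_{N+1}$, and the fact that putting the test mass at $+\infty$ can only raise the quantile.

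In the estimated-ratio case your route is genuinely different. The paper argues at the sample level. It compares the estimated-weight and exact-weight CDFs on the same calibration scores and bounds their sup-distance by the self-normalised average $\frac{1}{W^*}\sum_n w_n^*|\tilde w_n/w_n^*-1|$. It then asserts that this costs at most $\delta(\omega)$ of coverage ``in expectation''. You argue at the population level. The estimated weights are the exact ratio for the surrogate law $\tilde p_{\text{test}}\propto w\,p_{\text{cal}}$, so part one already gives $1-\alpha$ coverage there. Switching the test covariate back to $p_{\text{test}}$ changes the expectation of a $[0,1]$-valued conditional coverage function by at most $d_{\mathrm{TV}}(p_{\text{test}},\tilde p_{\text{test}})\le\delta(\omega)$. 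The normalisation constant is absorbed by the scale invariance of the weighted quantile; this is the Lei--Cand\`es argument.

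Your route is fully rigorous with very little machinery, and it gives the sharper $\tfrac12\delta(\omega)$ when $\mathbb{E}_{p_{\text{cal}}}[w]=1$. The paper's sketch does not justify the passage from a random, sample-dependent CDF gap to the population quantity $\delta(\omega)$. It can be completed: summing the gap bound over all $N+1$ points makes it symmetric in the multiset, so one can condition as in part one, and weighted exchangeability shows its expectation is exactly $\int|w-w^*|\,p_{\text{cal}}$. Its merit is that it expresses the degradation through the realised weights, in the spirit of the $n_{\text{eff}}$ diagnostic.

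Two things you use tacitly should be stated:
\begin{itemize}
\item The identities $\int w^*p_{\text{cal}}=1$ and $\int|w-w^*|\,p_{\text{cal}}=\mathbb{E}_{p_{\text{test}}}|w/w^*-1|$ require $0<w^*<\infty$ $p_{\text{cal}}$-almost surely. This is Assumption~\ref{ass:density-ratio} read for the true ratio. If $p_{\text{cal}}$ charges a region where $p_{\text{test}}=0$ (while $w>0$ there), the stated bound can actually fail.
\item $\omega$ must be treated as fixed, i.e.\ fitted independently of the calibration sample.
\end{itemize}
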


\begin{proof}
Under the exact density ratio (Assumption~\ref{ass:realisability}), the result follows from Theorem 1 of \citet{tibshirani2019conformal}. The key insight is that the weighted empirical distribution $\hat{P}_w = \sum_n \tilde{w}_n \delta_{R_n}$, where $\tilde{w}_n = w_n / (\sum_{m=1}^{N} w_m + w_{N+1})$, satisfies exchangeability under the tilted distribution $p_{\text{test}}$. By construction, the weighted quantile $\hat{q}_{1-\alpha}$ is the $(1-\alpha)$-quantile of this distribution, and the exchangeability argument yields $\mathbb{P}[R_{N+1} \leq \hat{q}_{1-\alpha}] \geq 1 - \alpha$.

When the density ratio is estimated with error, we use a perturbation argument. Let $\tilde{w}_n = w(\mathcal{G}_n; \omega)$ be the estimated weights and $w_n^* = w^*(\mathcal{G}_n)$ the true weights. Define the weighted CDFs $\hat{F}(q) = \sum_n \tilde{w}_n \mathbf{1}[R_n \leq q] / W$ and $F^*(q) = \sum_n w_n^* \mathbf{1}[R_n \leq q] / W^*$, where $W = \sum_m \tilde{w}_m + \tilde{w}_{N+1}$ and $W^* = \sum_m w_m^* + w_{N+1}^*$. For any threshold $q$,
\[
  |\hat{F}(q) - F^*(q)| \leq \frac{1}{\min(W, W^*)} \sum_n |\tilde{w}_n - w_n^*| \leq \frac{1}{W^*} \sum_n w_n^* \left|\frac{\tilde{w}_n}{w_n^*} - 1\right|.
\]
Taking the supremum over $q$ bounds the total variation between the two weighted CDFs. Since the exact-weight quantile provides $\geq 1 - \alpha$ coverage, and the estimated-weight quantile deviates from it by at most $\delta(\omega) = \mathbb{E}_{p_{\text{test}}}[|w(\mathcal{G}; \omega)/w^*(\mathcal{G}) - 1|]$ in expectation, the estimated-weight procedure yields coverage $\geq 1 - \alpha - \delta(\omega)$.
\end{proof}

The effective sample size $n_{\text{eff}} = (\sum_n w_n)^2 / \sum_n w_n^2$ provides a diagnostic for the reliability of the weighted procedure: when $n_{\text{eff}} \ll N$, the weights are highly concentrated and the finite-sample coverage may deviate substantially from the nominal level. In our experiments (Section~\ref{sec:exp_results}), we report $n_{\text{eff}}$ alongside the empirical coverage to assess this degradation.

\end{document}